\DeclareMathAlphabet\mathbfcal{OMS}{cmsy}{b}{n}
\renewcommand*{\backrefalt}[4]{%
    \ifcase #1 \footnotesize{(not cited)}%
    \or        \footnotesize{(cited on page~#2)}%
    \else      \footnotesize{(cited on pages~#2)}%
    \fi}
\newcommand{\BEAS}{\begin{eqnarray*}}
\newcommand{\EEAS}{\end{eqnarray*}}
\newcommand{\BEA}{\begin{eqnarray}}
\newcommand{\EEA}{\end{eqnarray}}
\newcommand{\BEQ}{\begin{equation}}
\newcommand{\EEQ}{\end{equation}}
\newcommand{\BIT}{\begin{itemize}}
\newcommand{\EIT}{\end{itemize}}
\newcommand{\BNUM}{\begin{enumerate}}
\newcommand{\ENUM}{\end{enumerate}}
\newcommand{\BA}{\begin{array}}
\newcommand{\EA}{\end{array}}
\newcommand{\var}{\mathop{ \rm var}}
\newcommand{\tr}{\mathop{ \rm tr}}
\newcommand{\idm}{I}
\newcommand{\rb}{\mathbb{R}}
\newcommand{\BlackBox}{\rule{1.5ex}{1.5ex}}  
\newcommand{\ds}{\displaystyle }
\newenvironment{proof}{\par\noindent{\bf Proof\ }}{\hfill\BlackBox\\[2mm]}
\newtheorem{proposition}{Proposition}
\newcommand{\mysec}[1]{Section~\ref{sec:#1}}
\newcommand{\eq}[1]{Eq.~(\ref{eq:#1})}
\newcommand{\myfig}[1]{Figure~\ref{fig:#1}}
 \def \hphi{ \widehat{\varphi} }
\def \hS{ \widehat{ \Sigma} }
\def \S{  { \Sigma} }
\def \E{{\mathbb E}}
\def \P{{\mathbb P}}
\def \P{{\mathbb P}}
\title{High-dimensional analysis of double descent \\ for linear regression with random projections}
\author{Francis Bach\\
Inria,  Ecole Normale Sup\'erieure \\
PSL Research University \\
{\small \url{francis.bach@inria.fr}}}
\date{\today}
\begin{document}
\maketitle

\begin{abstract}
We consider linear regression problems with a varying number of random projections, where we provably exhibit a double descent curve for a fixed prediction problem, with a high-dimensional analysis based on random matrix theory.
We first consider the ridge regression estimator and review earlier results using classical notions from non-parametric statistics, namely degrees of freedom, also known as effective dimensionality.  
We then compute asymptotic equivalents of the generalization performance (in terms of squared bias and variance) of the minimum norm least-squares fit with random projections, providing simple expressions for the double descent phenomenon. 
\end{abstract}

\section{Introduction}

Over-parameterized models estimated with some form of gradient descent come in various forms, such as linear regression with potentially non-linear features, neural networks, or kernel methods. The double descent phenomenon can be seen empirically in several of these models~\cite{belkin2019reconciling,geiger2019scaling}: Given a fixed prediction problem, when the number of parameters of the model is increasing from zero to the number of observations, the generalization performance traditionally goes down and then up, due to overfitting. Once the number of parameters exceeds the number of observations, the generalization error decreases again, as illustrated in \myfig{doubledescent}.

\begin{figure}
\begin{center}
\includegraphics[scale=.34]{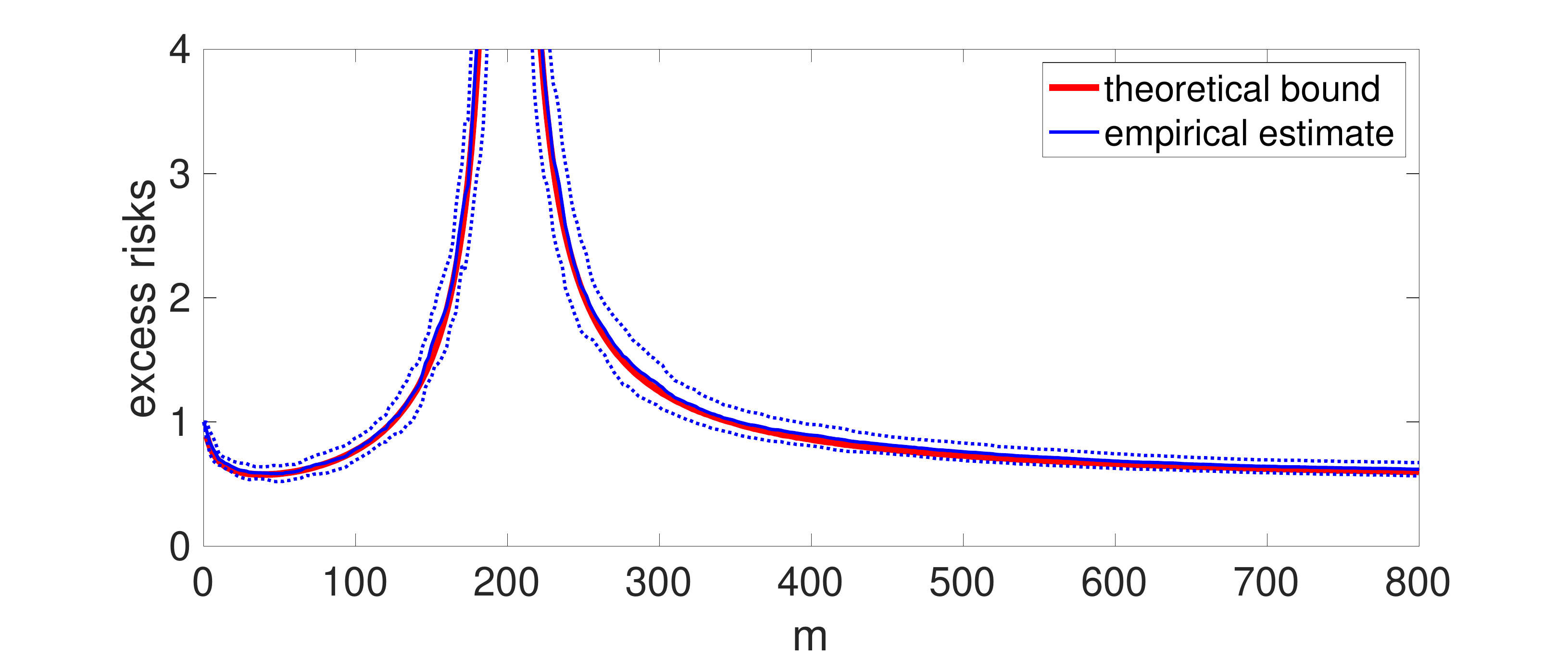}
\end{center}

\vspace*{-.4cm}

\caption{Example of a double descent curve, for linear regression with random projections with $n = 200$ observations, in dimension $d= 400$ and a non-isotropic covariance matrix. The data are normalized so that predicting zero leads to an excess risk of $1$ and the noise so that the optimal expected risk is $1/4$. The empirical estimate is obtained by sampling 20 datasets and 20 different random projections from the same distribution and averaging the corresponding excess risks. We plot the empirical performance together with our asymptotic equivalents from \mysec{rp}.\label{fig:doubledescent}}
\end{figure}

The phenomenon has been theoretically analyzed in several settings, such as random features based on neural networks~\cite{mei2019generalization}, random Fourier features~\cite{liao2020random}, or linear regression~\cite{belkin2020two,hastie2019surprises}.
While the analysis of~\cite{mei2019generalization,liao2020random} for random features corresponds to a single prediction problem with a sequence of increasingly larger prediction models, most of the analysis of \cite{hastie2019surprises} for linear regression does not consider a single problem, but varying problems, which does not actually lead to a double descent curve.
 Random subsampling on a single prediction problem was analyzed with a simpler model with isotropic covariance matrices in~\cite{belkin2020two} and \cite[Section 5.2]{hastie2019surprises}, but without a proper double descent as the model is too simple to account for a U-shaped curve in the under-parameterized regime. In work related to ours, principal component regression was analyzed by~\cite{xu2019number} with a double descent curve but with less general assumptions regarding the spectrum of the covariance matrix and the optimal predictor.

In this paper, we consider linear regression problems and consider \emph{random projections}, whose number increases, where we provably exhibit a double descent curve for a fixed prediction problem. Our analysis follows the high-dimensional analysis of~\cite{hastie2019surprises,dobriban2018high,richards2021asymptotics,lejeune2022asymptotics,bartlett_montanari_rakhlin_2021} based on random matrix theory~\cite{bai2010spectral}, and we give asymptotic expressions for the (squared) bias and the variance terms of the excess risk. These expressions and the trade-offs they lead to will be the same as what can be obtained with ridge regression~\cite{hoerl1970ridge}, where a squared Euclidean penalty is added to the empirical risk.

The paper is organized as follows.
\BIT
\item We first present the asymptotic set-up we will follow in \mysec{highdim}, and review in \mysec{rmt} the results from random matrix theory that we will need for our main result on random projections.  

\item We consider in \mysec{ridge} the ridge regression estimator and re-interpret the results of~\cite{dobriban2018high,richards2021asymptotics,cui2021generalization,wu2020optimal,bartlett_montanari_rakhlin_2021} using classical notions from non-parametric statistics, namely the degrees of freedom, a.k.a.~effective dimensionality~\cite{zhang2005learning,FCM:Caponetto+Vito:2007}. When going from a fixed design analysis (where inputs are assumed deterministic) to a random design analysis (where inputs are random), the prediction performance in terms of bias and variance has the same expression, but with a larger regularization parameter, which corresponds to an additional regularization, which, following~\cite{montanari2022interpolation}, we will refer to as ``self-induced''.

\item With our new interpretation, we consider in \mysec{minnorm} the minimum norm least-squares estimate and analyze its performance (which corresponds to $\lambda=0$ above for ridge regression), thus recovering the results of~\cite{hastie2019surprises,bartlett2020benign}. This corresponds to the end of the double descent curve.

\item In \mysec{rp}, we compute asymptotic equivalents of the generalization performance (in terms of bias and variance) of the minimum norm least-squares fit with random projections, providing simple expressions for the double descent phenomenon.  If $n$ is the number of observations and $m$ is the number of random projections, the variance term goes up and explodes at $m=n$ and then goes down. In contrast, the bias term may exhibit a U-shaped curve on its own in the under-parameterized regime ($m<n$), blows up at $m=n$, and then goes down. Our result relies on using a high-dimensional analysis both on the data and on the random projections. 
\EIT

\section{High-dimensional analysis of linear regression}
\label{sec:highdim}
We consider the traditional random design linear regression model, where $x_1,\dots,x_n \in \rb^d$ are sampled independently and with identical distributions (i.i.d.) with covariance matrix $\S \in \rb^{d \times d}$, and $y_i = x_i^\top \theta_\ast + \varepsilon_i$, with $\varepsilon_i$ and $x_i$ independent, and $\E [\varepsilon_i]=0$ and $\var(\varepsilon_i) = \sigma^2$ for some $\theta_\ast \in \rb^d$.

We denote $y  \in \rb^n$ the response vector, $X \in \rb^{n \times d}$ the design matrix, and $\varepsilon \in \rb^n$ the noise vector. We denote by $\hS = \frac{1}{n} X^\top X \in \rb^{d \times d}$ the non-centered empirical covariance matrix, while $X X^\top \in \rb^{n \times n}$ is the kernel matrix. 

The excess risk for an estimator $\hat{\theta}$ is $\mathcal{R}(\hat \theta) = (\hat{\theta}  - \theta_\ast) \S ( \hat{\theta}  - \theta_\ast) $, and we will always consider expectations with respect to $\varepsilon$, thus conditioned on $X$ and on the potential additional random projections.
 The expectation of the excess risk will be composed of two terms: a (squared) ``bias'' term
$\mathcal{R}^{({\rm bias})}(\hat \theta)$ corresponding to $\sigma=0$ (and thus independent of $\varepsilon$), and a ``variance'' term $\E_\varepsilon \big[ \mathcal{R}^{({\rm var})}(\hat \theta) \big]$ corresponding to $\theta_\ast=0$ (and after taking the expectation with respect to~$\varepsilon$). All of our asymptotic results will then be almost surely in all other random quantities (e.g., $X$ and the random projections $S$ later).

We make similar high-dimensional assumptions as \cite{dobriban2018high,richards2021asymptotics}, that is:
\BIT
\item[\textbf{(A1)}] $X =  Z \S^{1/2}$ with $Z \in \rb^{n\times d}$ with sub-Gaussian i.i.d.~components with mean zero and unit variance.

\item[\textbf{(A2)}] The sample size $n$ and the dimension $d$ go to infinity, with $  \frac{d}{n}$ tending to $\gamma > 0$.

\item[\textbf{(A3)}] The spectral measure 
$  \frac{1}{d} \sum_{i=1}^d   \delta_{\sigma_i}$
of $\S$ converges to a probability distribution $\mu$ on $\rb_+$, where $\sigma_1,\dots,\sigma_d$ are the eigenvalues of $\S$. Moreover, $\mu$ has compact support in $\rb_+^\ast$, and $\S $ is invertible and bounded in operator norm.

\item[\textbf{(A4)}] The measure $   \sum_{i=1}^d ( v_i ^\top \theta_\ast)^2 \delta_{\sigma_i}$ converges to a measure $\nu$ with bounded mass, where $v_i$ is the unit-norm eigenvector of $\S$ associated to $\sigma_i$. The norm of $\theta_\ast$ is bounded.

\EIT

Assumption \textbf{(A1)} does \emph{not} assume Gaussian data but includes $Z$ with standard Gaussian components or Rademacher random variables (uniform in $\{-1,1\}$).

Assumption \textbf{(A2)} states that the ratio of dimensions tends to a constant, but could be relaxed by a uniform boundedness assumption~\cite{rubio2011spectral}. See~\cite{cheng2022dimension} for an analysis that goes beyond this assumption of $n$ and $d$ being of the same order.

Assumption \textbf{(A3)} implies that for any bounded function $r: \rb_+ \to \rb$, $  \frac{1}{d} \tr [ r(\Sigma)] 
\to \int_0^{+\infty}\!\! r(\sigma) d\mu(\sigma)$.
Note that in  \textbf{(A3)}, we assume that the support of the limiting $\mu$ is bounded away from zero (e.g., no vanishing eigenvalues).

Assumption \textbf{(A4)} is equivalent to: for any bounded function $r: \rb_+ \to \rb$, $  \theta_\ast^\top r(\S) \theta_\ast \to \int_0^{+\infty} \!\! r(\sigma) d\nu(\sigma)$.
Moreover, it is often replaced by $\theta_\ast$ being random with mean zero and covariance matrix proportional to identity~\cite{dobriban2018high}, or a spectral variant of $\S$~\cite{richards2021asymptotics}. This corresponds to having $\nu$ having a density with respect to~$\mu$.

\section{Random matrix theory tools}
\label{sec:rmt}

We consider the kernel matrix $XX^\top = Z \S Z^\top \in \rb^{n \times n}$ with all components of $Z \in \rb^{n \times d}$ being i.i.d.~sub-Gaussian with zero mean and unit variance, that is, following Assumption \textbf{(A1)}. We also assume  \textbf{(A2)} and  \textbf{(A3)} throughout this section. We denote by $\hS = \frac{1}{n}X^\top X \in \rb^{d \times d}$ the empirical covariance matrix.

We now present the tools from random matrix theory that we will need. Most of them have already been used in the same context~\cite{dobriban2018high,hastie2019surprises,richards2021asymptotics,lejeune2022asymptotics}, but more refined ones will be needed along the lines of~\cite{dar2021common,lejeune2022asymptotics} (\mysec{spectrace}) and we will give explicit interpretations in terms of degrees of freedom (\mysec{rmt_inter}) and self-induced regularization (\mysec{implicitreg}).

\subsection{Summary and re-interpretation of existing results} 
\label{sec:rmt_inter}

We will need to relate the spectral properties of the empirical covariance matrix $\hS$ to the ones of the population covariance matrix $\S$. This typically includes the distribution of eigenvalues, but in this paper, we will only need spectral functions of the form $\tr [ r(\hS)]$, or more general quantities, such as $\tr [A r(\hS)]$, $\tr [ Ar(\hS) Br(\hS)]$, for matrices $A,B \in \rb^{d \times d}$.

We summarize the relevant results from random matrix theory through the asymptotic equivalence,\footnote{In this paper, we use the asymptotic equivalent notation $u \sim v$, to mean that the ratio $u/v$ tends to one when the dimensions $n,d$ go to infinity. This allows to provide results for diverging quantities which are more easily interpretable, such as degrees of freedom.} for any $\lambda > 0$,
\BEQ
\label{eq:eqdf} \tr \big[ \hS ( \hS+ \lambda \idm)^{-1}\big] \sim \tr \big[ \S \big( \S + \kappa(\lambda) \idm \big)^{-1} \big],
\EEQ
where $\kappa: \rb_+ \to \rb_+$ is an increasing function. Within the analysis of ridge regression, these are often referred to as the ``degrees of freedom''~\cite{FCM:Caponetto+Vito:2007,hastie_GAM}, and denoted\footnote{We use the notation $ {\rm df}_1$ as we will introduce a related notion $ {\rm df}_2$ later.}
$$
\widehat{\rm df}_1(\lambda) = \tr \big[ \hS ( \hS+ \lambda \idm)^{-1}\big]
\ \mbox{ and  }  \  {\rm df}_1(\kappa) = \tr \big[ \S ( \S+ \kappa \idm)^{-1}\big].
$$
In the limit when $d$ tends to infinity,  by definition of $\mu$ in Assumption \textbf{(A3)}, then $   \frac{1}{d} {\rm df}_1(\kappa) \to \int_0^{+\infty} \frac{ \sigma d\mu(\sigma)}{\sigma + \kappa}$, which is strictly decreasing in $\kappa$, with a value of $1$ at $\kappa=0$. Since $\tr \big[ \hS ( \hS+ \lambda \idm)^{-1}\big] \leqslant d$, this asymptotically defines uniquely $\kappa(\lambda)$.

The extra knowledge from random matrix theory will be the self-consistency equation 
$$
\kappa(\lambda) - \lambda  
= \kappa(\lambda) \cdot  \gamma \int_0^{+\infty} \frac{ \sigma d\mu(\sigma)}{\sigma + \kappa},$$
that allows to define $\kappa(\lambda)$,
which we will write equivalently
$$ \kappa(\lambda) - \lambda   \sim
\kappa(\lambda)  \cdot \frac{1}{n}  {\rm df}_1(\kappa(\lambda) ) .
$$
As shown below, for $\lambda$ large, then $\kappa(\lambda) \sim \lambda$. When $\lambda$ tends to zero (which will be the case in classical scenarios where we regularize less as we observe more data),  $\kappa(\lambda)$ will tend to zero only for under-parameterized models ($\gamma<1)$, while for over-parameterized model ($\gamma>1$), it will tend to a constant.

In statistical terms, the degrees of freedom for the empirical covariance matrix correspond to the degrees of freedom of the population covariance matrix with a larger regularization parameter, leading to an additional regularization.

Beyond \eq{eqdf}, we will need asymptotic equivalents for the   quantities
$\tr \big[ A  \hS ( \hS+ \lambda \idm)^{-1}  \big] $
and $
\tr \big[ A  \hS ( \hS+ \lambda \idm)^{-1}  B  \hS ( \hS+ \lambda \idm)^{-1}\big]  $ for matrices $A,B \in \rb^{d \times d}$.  They will be valid when certain quantities for the matrices $A$ and $B$ converge (see Prop.~\ref{prop:spectral} and Prop.~\ref{prop:spectralK} below).

These results recover existing work with $A,B = \idm$ or $\Sigma$~\cite{ledoit2011eigenvectors,dobriban2018high}, and lead to the same formulas as~\cite{dar2021common,lejeune2022asymptotics} obtained with similar assumptions. They are needed for the ridge regression results in \mysec{ridge} and for the random projection results in \mysec{rp}, where they will, for example, be used with $A = \theta_\ast \theta_\ast^\top$.

\subsection{Self-induced regularization}
\label{sec:implicitreg}

We consider the Stieltjes transform of the spectral measure of the kernel matrix $XX^\top \in \rb^{n \times n}$, with $z \in \mathbb{C} \backslash \rb_+$:
$$
\hphi(z) = \frac{1}{n} \tr \Big[ \Big(
\frac{1}{n} XX^\top  - z \idm \Big)^{-1}
\Big]
= \tr \big[ ( XX^\top - nz\idm)^{-1}\big].
$$
This transform is known to fully characterize the spectral distribution of $XX^\top$ (see, e.g., \cite{bai2010spectral} and references therein).
Then for all $z \in \mathbb{C} \backslash \rb_+$, assuming
\textbf{(A1)}, \textbf{(A2)}, and \textbf{(A3)},
$\hphi(z)$ is known to converge almost surely, and its limit $\varphi(z)$ satisfies the following equation (see Appendix~\ref{app:spectral} for a simple argument leading to it)~\cite{bai2010spectral,ledoit2011eigenvectors}:
\BEQ
\label{eq:varphiz}
\frac{1}{\varphi(z)} + z = \gamma \int_0^{+\infty} \frac{ \sigma d\mu(\sigma)}{1 + \sigma \varphi(z)}.
\EEQ
When $\Sigma = \sigma \idm$, this allows to compute $\varphi(z)$ and, by inversion of the Stieltjes transform, to recover the Marchenko-Pastur distribution. In this paper, we will not need to know the limiting density (which is anyway uneasy to describe for general $\Sigma$) and only access it through its Stieltjes transform.

Indeed, for $z = -\lambda$ for $\lambda > 0$, we get $
\hphi(-\lambda) = \tr \big[ (XX ^\top + n \lambda \idm)^{-1} \big] \to \varphi(-\lambda) $ almost surely, with
\BEQ
\label{eq:varphilambda}
\frac{1}{\varphi(-\lambda)} - \lambda = \gamma \int_0^{+\infty} \frac{ \sigma d\mu(\sigma)}{1 + \sigma \varphi(-\lambda)}.
\EEQ

In the ridge regression context, as mentioned above, the quantity ${\rm df}_1(\kappa) = \tr [ \S (\S + \kappa \idm)^{-1} ] \in [0,d]$ is referred to as the ``degrees of freedom''. It is a strictly decreasing function of $\kappa$, with ${\rm df}_{1}(0) = {\rm rank}(\S)$.  It is asymptotically equivalent to $  \sum_{i=1}^d \frac{\sigma_i}{\sigma_i + \kappa}
\sim d \int_0^{+\infty} \frac{\sigma d\mu  (\sigma)}{\sigma + \kappa}$. Thus, we can rewrite \eq{varphilambda} as
$$
\frac{1}{\varphi(-\lambda)} - \lambda \sim \frac{1}{\varphi(-\lambda)}  \cdot \frac{1}{n}  {\rm df}_1\Big(\frac{1}{\varphi(-\lambda)} \Big) .$$
Therefore, we can define our equivalent regularization parameter $\kappa(\lambda)=\frac{1}{\varphi(-\lambda)} \in \rb_+$
 which is the almost sure limit of 
$ 1 / \tr \big[ (XX ^\top + n \lambda \idm)^{-1} \big] $, and  such that
\BEQ
\label{eq:kappa}
\kappa(\lambda) - \lambda \sim \kappa(\lambda)  \cdot \frac{1}{n}  {\rm df}_1(\kappa(\lambda) ) 
\ \ \Leftrightarrow \  \ \lambda \sim \kappa(\lambda) \Big( 1 - \frac{1}{n}  {\rm df}_1(\kappa(\lambda) )  \Big). 
\EEQ

Depending on the relationship between $d$ and $n$ (that is, $d<n$ or $d>n$), we have different behaviors for the function $\kappa$ (see below), but $\kappa(\lambda)$ is always larger than $\lambda$.
This additional regularization has been explored in a number of works~\cite{lejeune2022asymptotics,jacot2020implicit,cheng2022dimension}, and we refer to it as self-induced.

Note that in order to compute $\kappa(\lambda)$, we can either solve \eq{kappa} if we can compute $ {\rm df}_1(\kappa(\lambda))$, or simply use that $\kappa(\lambda)^{-1}$ is the almost sure limit of 
$ \tr \big[ (XX ^\top + n \lambda \idm)^{-1} \big]$, when $n,d$ go to infinity. We now provide properties of the function $\kappa$.
 
\paragraph{Isotropic covariance matrices} We consider the case $\Sigma = \sigma \idm$ to first study the dependence between $\kappa(\lambda)$ and $\lambda$. By the use of Jensen's inequality, this will lead to bounds in the general case. In this isotropic situation, we have $\frac{1}{n} {\rm df}_1(\kappa) = \frac{ \gamma \sigma}{\sigma + \kappa}$, and \eq{kappa} is equivalent to $\lambda = \kappa(\lambda) \big( 1 - \frac{ \gamma \sigma}{\sigma + \kappa}\big) $. We can solve it in closed form as:
\BEQ
\label{eq:kappaiso}
\kappa(\lambda) = \frac{1}{2} \Big( \lambda - \sigma(1-\gamma) + \sqrt{ (\sigma ( 1-\gamma)-\lambda)^2 + 4\lambda \sigma} \Big).
\EEQ
We then have three cases, as illustrated in \myfig{kappa}. The function $\kappa$ is always increasing with the same asymptote $\lambda + \sigma \gamma$ at infinity, but different behaviors at $0$ (see a more thorough discussion in \cite[Section 5.4.1]{lejeune2022asymptotics}):
\BIT
\item $\gamma<1$: $\kappa(0)=0$ with $\kappa'(0) = 1/(1-\gamma)$.
\item $\gamma>1$: $\kappa(0) = (\gamma-1) \sigma >0$.
\item $\gamma=1$: $\kappa(0)=0$ with $\kappa'(0) = +\infty$, and $\kappa \sim \sqrt{\lambda}$ around $0$.
\EIT

\begin{figure}
\begin{center}
\includegraphics[scale=.4]{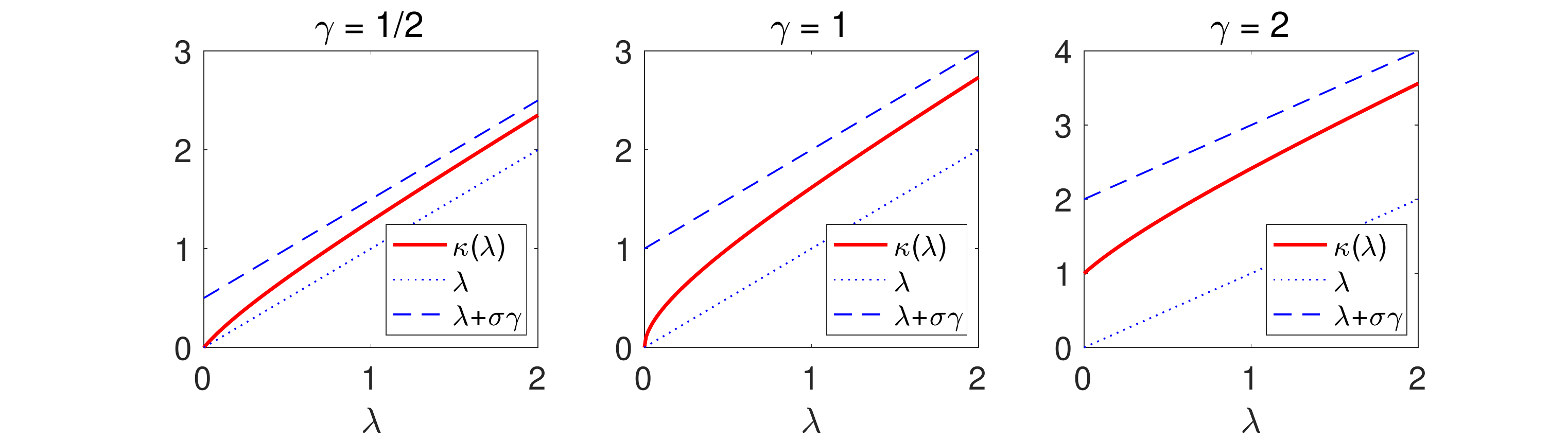}
\end{center}

\vspace*{-.4cm}

\caption{Implicit regularization parameter $\kappa(\lambda)$ in the three regimes for isotropic covariance matrices, with $\sigma = 1$. See text for details. \label{fig:kappa}}
\end{figure}

\paragraph{General case} Beyond isotropic covariance matrices, we have a similar behavior in the general case, in particular, by Jensen's inequality, the expression in \eq{kappaiso} is an upper-bound with $\sigma$ replaced by $\frac{1}{d} \tr(\Sigma)$.
\BIT

\item \emph{Under-parameterized} ($\gamma < 1 \Leftrightarrow d<n$): we then have $ {\rm df}_1(\kappa(\lambda) )\leqslant d < n$, and the function $\lambda \mapsto \kappa(\lambda)$ is strictly increasing with $\kappa(0)=0$ and $\kappa(\lambda) \in [ \lambda, \lambda/(1-d/n)]$, with an equivalent $\kappa(\lambda) \sim \lambda + \frac{1}{n} \tr \Sigma $ when $\lambda$ tends to infinity, and 
the equivalent $\kappa(\lambda) \sim \lambda/(1-d/n)$ when $\lambda$ tends to zero (since we have assumed that ${\rm rank}(\S) = d$).

\item \emph{Over-parameterized} ($\gamma > 1 \Leftrightarrow d>n$): we then have $\kappa(0)>0$, which is defined by 
${\rm df}_1(\kappa(0) )=n$. The function $\lambda \mapsto \kappa(\lambda)$ is still strictly increasing, 
with an equivalent $\kappa(\lambda) \sim \lambda + \frac{1}{n} \tr \Sigma$ when $\lambda$ tends to infinity. By Jensen's inequality, we have 
$  {\rm df}_1(\kappa(\lambda) )  \leqslant \frac{ \tr \S}{\kappa(\lambda) + \tr \S / d} 
\leqslant \frac{ \tr \S}{\kappa(\lambda)  } $. This in turn implies that $\kappa(\lambda) \in [ \lambda, \lambda + \frac{\tr \S}{n}]$, and also a finer bound based on \eq{kappaiso} with $\sigma$ replaced by $\frac{1}{d} \tr (\Sigma)$. Moreover, we have the bound $\kappa(0) \leqslant \frac{\tr \S}{n} ( 1 - n/d) = \frac{ \tr \S}{d} ( \gamma - 1)$.
\EIT
 
 \paragraph{``Classical'' statistical asymptotic behaviors} Within positive-definite kernel methods~\cite{FCM:Caponetto+Vito:2007}, it is common to have infinite-dimensional covariance operators, with a sequence of eigenvalues of the form $\lambda_k = \frac{\tau}{k^\alpha}$ with $\alpha>1$ and $k \geqslant 1$. To make it correspond to the high-dimensional framework with $k\in \{1,\dots,d\}$ with $d$ tending to infinity, we need to rescale the eigenvalues by $d^\alpha$, so that the spectral measure
 is $\widehat{\mu} = \frac{1}{d} \sum_{k=1}^d \delta_{\tau (d/k)^\alpha}$, which converges to the distribution of $\tau / u^\alpha$ for $u$ uniform on $[0,1]$. The support of this distribution is bounded from below, but not from above, and thus does not satisfy our assumptions (but in our simulations, our asymptotic equivalents match the empirical behavior). See~\cite[Section 4.2]{cheng2022dimension} for an analysis that covers explicitly this spectral behavior.
 
 In terms of degrees of freedom, we then have, using the same rescaling by $d^\alpha$, and with the change of variable $v=ud(\kappa/\tau)^{1/\alpha}$:
 $$
 {\rm df}_2(\kappa d^\alpha)
 \! \sim \! d \int_0^1\! \frac{ \tau u^{-\alpha}du}{ \tau u^{-\alpha} + d^\alpha \kappa}
 =  d \int_0^1 \!\!\!\frac{  du}{ 1   + (ud)^\alpha \kappa \tau^{-1}}
\! \sim  \!  (\tau/\kappa)^{1/\alpha} \int_0^{+\infty}  \!\frac{  dv}{ 1 + v^\alpha}.
 $$
We get the usual explosion of degrees of freedom in $\kappa^{-1/\alpha}$~\cite{FCM:Caponetto+Vito:2007}.
 It can then be shown, if our formulas apply, that $\kappa(0) \propto \frac{1}{n^\alpha}$. 
See~\cite{cui2021generalization} for a detailed analysis of the consequences of the ridge regression asymptotic equivalents when such assumptions are made.

\subsection{Asymptotic equivalents for spectral functions} 
\label{sec:spectrace}
Following~\cite{ledoit2011eigenvectors,dobriban2018high}, we 
can provide asymptotic equivalents for quantities depending on the spectrum of $\hS$. We prove in Appendix~\ref{app:spectraltrace} the following result, with two asymptotic equivalents matching the earlier work of~\cite[Lemma 10] {dar2021common} that was obtained for the special case of Gaussian distributions.

\begin{proposition}
\label{prop:spectral}
Assume \textbf{(A1)}, \textbf{(A2)},  \textbf{(A3)}, that $A$ and $B$ are bounded in operator norm, and that
the measures $   \sum_{i=1}^d   v_i ^\top A v_i  \cdot\delta_{\sigma_i}$
and $   \sum_{i=1}^d   v_i ^\top B v_i  \cdot\delta_{\sigma_i}$ converge to measures $\nu_A$ and $\nu_B$ with bounded total variation. Then, for $z \in \mathbb{C} \backslash \rb_+$, with $\varphi(z)$ satisfying \eq{varphiz},
\BEA
\label{eq:trA1}
\tr \big[ A  \hS ( \hS - z \idm)^{-1}  \big] 
& \sim &  \textstyle \tr \big[  A \S \big( \S + \frac{1}{\varphi(z)} \idm \big)^{-1} \big]
\\
\label{eq:trAB1}
\tr \big[ A  \hS ( \hS -z \idm)^{-1}  B  \hS ( \hS -z \idm)^{-1}\big]  
& \sim &  \textstyle \tr \big[  A \S \big( \S +  \frac{1}{\varphi(z)} \idm \big)^{-1} B \S \big( \S +  \frac{1}{\varphi(z)} \idm \big)^{-1} \big]  \\
\notag& & \hspace*{-2cm} \textstyle+ \frac{1}{ \varphi(z)^2}   \tr \big[
    A    \big( \S + \frac{1}{\varphi(z)} \idm \big)^{-2}  \Sigma \big]
     \cdot 
     \tr \big[
    B    \big( \S + \frac{1}{\varphi(z)} \idm \big)^{-2}  \Sigma \big]
    \cdot   \frac{1}{ n -  {\rm df}_2(1/\varphi(z))  }. 
    \EEA
\end{proposition}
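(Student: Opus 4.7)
The plan is to prove \eqref{eq:trA1} directly as an anisotropic refinement of the Stieltjes self-consistent equation \eqref{eq:varphiz}, and then to derive \eqref{eq:trAB1} from it via a bilinear perturbation and implicit differentiation.

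For \eqref{eq:trA1}, I would first use the elementary identities $\hS(\hS - z I)^{-1} = I + z G(z)$ with $G(z) := (\hS - z I)^{-1}$, and $\S(\S + \kappa I)^{-1} = I - \kappa (\S + \kappa I)^{-1}$, so that the claim reduces to showing
$$-z\,\tr[A\,G(z)] \sim \kappa\,\tr\big[A\,(\S + \kappa I)^{-1}\big], \qquad \kappa := 1/\varphi(z).$$
This is the standard anisotropic Ledoit--P\'ech\'e equivalent for the resolvent of a sub-Gaussian sample covariance, which I would establish by the Sherman--Morrison / leave-one-out argument already sketched in Appendix~\ref{app:spectral} for $A = I$. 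Writing $\hS = \frac{1}{n}\sum_i x_i x_i^\top$ and $G_i(z) := (\hS - z I - \frac{1}{n} x_i x_i^\top)^{-1}$, Sherman--Morrison gives $G\,x_i = G_i\,x_i / (1 + \frac{1}{n} x_i^\top G_i x_i)$, and combined with $I = -z G + \hS G$ this yields a self-consistent representation of $G$. Sub-Gaussian Hanson--Wright concentration for the quadratic forms $\frac{1}{n} x_i^\top G_i x_i$ pins them down to $\frac{1}{n}\tr[\S\,G_i]\sim\frac{1}{n}\tr[\S\,G]$, and matching with \eqref{eq:varphiz} identifies the limit as $-\kappa/z$. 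Taking traces against $A$ and invoking convergence of $\sum_i (v_i^\top A v_i)\,\delta_{\sigma_i}$ to $\nu_A$ (so that spectral combinations of the form $\tr[A (\S+\kappa I)^{-1}\cdot p(\S)]$ converge) completes \eqref{eq:trA1}.

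For \eqref{eq:trAB1}, the key reduction is the derivative identity
$$\tr[A\,G(z)\,B\,G(z)] = -\,\frac{\partial}{\partial t}\bigg|_{t=0}\tr\big[A\,(\hS + t B - z I)^{-1}\big],$$
which turns a second-order trace in two resolvents into a derivative of a first-order one. The leave-one-out argument from the first step extends verbatim to the perturbed matrix $\hS + tB$, producing an anisotropic equivalent of the same structural form as \eqref{eq:trA1} but with $\S$ effectively shifted by $tB$ and $\kappa$ replaced by a perturbed scalar $\kappa^t$ defined by a perturbed version of \eqref{eq:varphiz}. Differentiating at $t=0$ splits into two contributions: a direct derivative, from the explicit $tB$, which after invoking the expansion $\hS G B \hS G = B + z G B + z B G + z^2 G B G$ (itself a consequence of $\hS G = I + zG$) and re-applying \eqref{eq:trA1} to the lower-order pieces reassembles into the factorized term $\tr[A\,\S(\S + \kappa I)^{-1}\,B\,\S(\S + \kappa I)^{-1}]$; and an indirect derivative, $\partial_t \kappa^t|_0$, obtained by the implicit function theorem applied to the perturbed self-consistent equation. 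A short computation shows that $\partial_t \kappa^t|_0$ carries the denominator $n - {\rm df}_2(\kappa)$ with ${\rm df}_2(\kappa) := \tr[\S^2(\S + \kappa I)^{-2}]$, and its contribution is precisely the rank-one correction term of \eqref{eq:trAB1}.

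The main obstacle is making the differentiation in $t$ rigorous on the random side: \eqref{eq:trA1} is an asymptotic almost-sure statement, whereas extracting a derivative in $t$ requires uniform control on a small interval around $t=0$. This can be handled by upgrading the Sherman--Morrison concentration into a quantitative anisotropic law with an $\varepsilon$-net in $t$, and by exploiting analyticity of $(\hS + tB - zI)^{-1}$ in $t$ on a disc staying away from the spectrum (controlled by standard sub-Gaussian Wishart operator-norm bounds under \textbf{(A1)}). The remaining ingredients --- Hanson--Wright for the quadratic forms and the prescribed convergence of the eigenbasis-weighted measures of $A$ and $B$ to $\nu_A$, $\nu_B$ --- are routine under \textbf{(A1)}--\textbf{(A3)}.
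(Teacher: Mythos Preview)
Your argument for \eqref{eq:trA1} via Sherman--Morrison leave-one-out and Hanson--Wright is essentially the paper's: it builds the representation $(\hS - zI)^{-1} = (-z\hat\varphi(z)\S - zI)^{-1}(I - \Delta)$ with $\Delta$ negligible, then traces against $A$.

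For \eqref{eq:trAB1} the routes diverge. The paper does \emph{not} perturb by $tB$; it inserts the $\Delta$-representation twice into $\tr[A\,G\,B\,G]$, drops the two single-$\Delta$ cross terms as negligible, and analyzes the surviving $\Delta B\Delta$ contribution directly: off-diagonal $(i\neq j)$ pairings vanish by independence, diagonal ones concentrate via $\tr[M(z_iz_i^\top-I)N(z_iz_i^\top-I)]\approx\tr M\cdot\tr N$. This already yields the correct structure up to an unknown scalar $\square$, which is identified by specializing to $A=B=I$ and matching against $\tr[(\hS - zI)^{-2}]=\partial_z\tr[(\hS - zI)^{-1}]$ (so the paper differentiates in $z$, once, only for this calibration). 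Your approach, by contrast, is the perturbation-in-$t$ method in the style of \cite{dar2021common,lejeune2022asymptotics}, which the paper explicitly flags as the alternative it chose not to follow.

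One correction to your sketch: the deterministic equivalent of $(\hS + tB - zI)^{-1}$ is not obtained by ``shifting $\S$ by $tB$''. The leave-one-out argument gives instead $(c^t\S + tB - zI)^{-1}$ with a \emph{rescaled} population covariance, where $c^t$ solves $c^t\big(1+\tfrac{1}{n}\tr[\S(c^t\S+tB-zI)^{-1}]\big)=1$ and $c^0=-z\varphi(z)$. The implicit-function computation for $\dot c$ then does deliver the factor $\tr[B\S(\S+\kappa I)^{-2}]/(n-{\rm df}_2(\kappa))$, so your argument goes through once this is straightened out. The trade-off is exactly the one you identify: the paper's direct expansion avoids having to upgrade \eqref{eq:trA1} to a statement uniform in $t$, at the price of a slightly more combinatorial treatment of the $\Delta B\Delta$ term.
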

\eq{trA1} can formally be seen as the limit
$  
\frac{1}{d} \tr \big[ A  \hS ( \hS - z \idm)^{-1}  \big] \to \int_0^{+\infty} \frac{\sigma d\nu_A(\sigma)}{\sigma + 1/\varphi(z)},
$
and a similar result holds for \eq{trAB1}.
From \eq{trA1} and \eq{trAB1}, as shown in Appendix \ref{app:spectraltrace}, we can also derive results for slightly modified traces, with $\hS  ( \hS - z \idm)^{-1} $ replaced by $ ( \hS - z \idm)^{-1} $, as:
\BEA
\label{eq:trA2}
\tr \big[ A   ( \hS - z \idm)^{-1}  \big] 
& \sim & \textstyle \frac{-1}{z \varphi(z)} \tr \big[ A \big(  \S + \frac{1}{\varphi(z)} \idm \big)^{-1} \big]
\\
\label{eq:trAB2}
\tr \big[ A    ( \hS -z \idm)^{-1}  B    \big( \hS -z \idm \Big)^{-1}\big]  
& \sim &  \textstyle\frac{1}{z^2 \varphi(z)^2}  \tr \big[ A   \big( \S + \frac{1}{\varphi(z)} \idm \big)^{-1}    B   \big( \S + \frac{1}{\varphi(z)} \idm \big)^{-1}    \big] 
 \\
\notag & & \textstyle
     \hspace*{-2cm}+ \frac{1}{z^2 \varphi(z)^2}   \tr \big[
    A    \big( \S + \frac{1}{\varphi(z)} \idm \big)^{-2}  \Sigma \big]
     \cdot 
     \tr \big[
    B    \big( \S + \frac{1}{\varphi(z)} \idm \big)^{-2}  \Sigma \big]
    \cdot   \frac{1}{ n -  {\rm df}_2(1/\varphi(z))  }.   \EEA

\paragraph{Expectation of kernel matrices}
Through the matrix inversion lemma, we have
$\hS  ( \hS - z \idm)^{-1} = X^\top X ( X^\top X - nz \idm)^{-1} = X^\top ( XX^\top - nz \idm) X$, and thus we obtain another set of asymptotic results, where we can replace $\Sigma^{1/2} A \Sigma^{1/2}$ by $A$, matching the earlier results of~\cite[Theorem 4.6]{lejeune2022asymptotics}.

\begin{proposition}
\label{prop:spectralK}
Assume \textbf{(A1)}, \textbf{(A2)},  \textbf{(A3)}, that $A$ and $B$ are bounded in operator norm, and that
the measures $   \sum_{i=1}^d   v_i ^\top A v_i  \cdot\delta_{\sigma_i}$
and $   \sum_{i=1}^d   v_i ^\top B v_i  \cdot\delta_{\sigma_i}$ converge to measures $\nu_A$ and $\nu_B$ with bounded total variation. Then, for $z \in \mathbb{C} \backslash \rb_+$, with $\varphi(z)$ satisfying \eq{varphiz},
\BEA
\label{eq:trA1K}
\tr \big[ A Z^\top   ( Z \Sigma Z^\top \!-\! nz \idm)^{-1}  Z \big] 
& \sim &  \textstyle \tr \big[  A   \big( \S + \frac{1}{\varphi(z)} \idm \big)^{-1} \big]
\\
\notag \tr \big[ A  Z^\top   ( Z \Sigma Z^\top\! -\! nz \idm)^{-1}  Z  B  Z^\top   ( Z \Sigma Z^\top\! - \! nz \idm)^{-1}  Z\big]  
& \sim & \textstyle  \tr \big[  A \big( \S \!+\!  \frac{1}{\varphi(z)} \idm \big)^{-1} B  \big( \S \!+\!  \frac{1}{\varphi(z)} \idm \big)^{-1} \big]  \\
\label{eq:trAB1K}
& & \textstyle \hspace*{-4.5cm}+ \frac{1}{ \varphi(z)^2}   \tr \big[
    A    \big( \S + \frac{1}{\varphi(z)} \idm \big)^{-2}    \big]
     \cdot 
     \tr \big[
    B    \big( \S + \frac{1}{\varphi(z)} \idm \big)^{-2}    \big]
    \cdot   \frac{1}{ n -  {\rm df}_2(1/\varphi(z))  }.
    \EEA
\end{proposition}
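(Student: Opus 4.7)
The plan is to reduce Proposition~\ref{prop:spectralK} to Proposition~\ref{prop:spectral} by a push-through (matrix inversion lemma) identity followed by a simple change of variable $A \leftrightarrow \Sigma^{-1/2} A \Sigma^{-1/2}$. Writing $X = Z \Sigma^{1/2}$, the standard identity $X^\top (XX^\top - nz I)^{-1} = (X^\top X - nz I)^{-1} X^\top$ yields
\begin{equation*}
\Sigma^{1/2} Z^\top ( Z \Sigma Z^\top - nz I )^{-1} Z \Sigma^{1/2}
\;=\; \hS ( \hS - z I )^{-1},
\end{equation*}
hence, since $\Sigma$ is invertible under \textbf{(A3)},
\begin{equation*}
Z^\top ( Z \Sigma Z^\top - nz I )^{-1} Z
\;=\; \Sigma^{-1/2} \hS ( \hS - z I )^{-1} \Sigma^{-1/2}.
\end{equation*}

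Substituting this into the traces in \eqref{eq:trA1K}--\eqref{eq:trAB1K} and moving the $\Sigma^{\pm 1/2}$ factors by cyclic permutation, I get
$\tr\big[A Z^\top( Z\Sigma Z^\top - nz I)^{-1} Z\big] = \tr\big[\tilde A\, \hS(\hS - z I)^{-1}\big]$ and the analogous identity for the quadratic expression, with $\tilde A := \Sigma^{-1/2} A \Sigma^{-1/2}$ and $\tilde B := \Sigma^{-1/2} B \Sigma^{-1/2}$. Before quoting Proposition~\ref{prop:spectral}, I verify its hypotheses for $\tilde A, \tilde B$: the operator-norm bound is immediate because $\Sigma^{-1/2}$ is bounded under \textbf{(A3)} (eigenvalues bounded away from zero), and the weighted spectral measures converge, with limits $\sigma^{-1} d\nu_A(\sigma)$ and $\sigma^{-1} d\nu_B(\sigma)$, which have bounded total variation on the compact support of $\mu$ that stays away from $0$.

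Applying Proposition~\ref{prop:spectral} to $\tilde A$ (and $\tilde B$), it remains to simplify the resulting right-hand sides using the commutativity of $\Sigma$ with $(\Sigma + \frac{1}{\varphi(z)} I)^{-1}$. For the first-order term, a direct computation gives
\begin{equation*}
\tr\big[\tilde A\, \Sigma ( \Sigma + \tfrac{1}{\varphi(z)} I)^{-1}\big]
= \tr\big[ A\, ( \Sigma + \tfrac{1}{\varphi(z)} I)^{-1}\big],
\end{equation*}
which is exactly \eqref{eq:trA1K}. Analogously, the leading quadratic term $\tr[\tilde A \Sigma (\Sigma + \cdot )^{-1} \tilde B \Sigma (\Sigma + \cdot)^{-1}]$ collapses to $\tr[A (\Sigma + \frac{1}{\varphi(z)} I)^{-1} B (\Sigma + \frac{1}{\varphi(z)} I)^{-1}]$ after cancelling the $\Sigma^{\pm 1/2}$, and each scalar factor $\tr[\tilde A ( \Sigma + \cdot )^{-2} \Sigma] = \tr[ A ( \Sigma + \cdot )^{-2}]$ by the same commutation, so the correction term matches \eqref{eq:trAB1K} verbatim (the denominator $n - {\rm df}_2(1/\varphi(z))$ is inherited unchanged from Proposition~\ref{prop:spectral}).

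The only genuine content here is the push-through identity; the rest is bookkeeping. The one subtle point—where Proposition~\ref{prop:spectral} could fail to apply—is the admissibility of the substitution $A \mapsto \Sigma^{-1/2} A \Sigma^{-1/2}$, which crucially relies on the lower bound on the spectrum in Assumption \textbf{(A3)} so that $\Sigma^{-1/2}$ is bounded and the weighted spectral measures retain bounded total variation in the limit. No new random-matrix-theoretic input is required.
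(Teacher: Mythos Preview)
Your proof is correct and follows precisely the route the paper indicates: it derives Proposition~\ref{prop:spectralK} from Proposition~\ref{prop:spectral} via the push-through identity $\Sigma^{1/2} Z^\top (Z\Sigma Z^\top - nzI)^{-1} Z \Sigma^{1/2} = \hS(\hS - zI)^{-1}$ and the substitution $A \mapsto \Sigma^{-1/2} A \Sigma^{-1/2}$, which the paper summarizes as ``we can replace $\Sigma^{1/2} A \Sigma^{1/2}$ by $A$''. Your additional verification that $\tilde A = \Sigma^{-1/2} A \Sigma^{-1/2}$ satisfies the hypotheses of Proposition~\ref{prop:spectral} (boundedness and convergence of the weighted spectral measure to $\sigma^{-1} d\nu_A(\sigma)$, using the lower spectral bound in \textbf{(A3)}) is a detail the paper leaves implicit.
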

  Like in~\cite[Theorem 4.6]{lejeune2022asymptotics}, \eq{trAB1K} can be rewritten more intuitively as
  $$\textstyle 
  Z^\top   ( Z \Sigma Z^\top\! -\! nz \idm)^{-1}  Z  B  Z^\top   ( Z \Sigma Z^\top\! - \! nz \idm)^{-1}  Z \sim
  \big( \S +  \frac{1}{\varphi(z)} \idm \big)^{-1} \big( B + \mu(z)\idm \big)  \big( \S +  \frac{1}{\varphi(z)} \idm \big)^{-1},
  $$
  with $  \mu(z) =  \frac{1}{ \varphi(z)^2}  \frac{\tr  [
    B     ( \S + \frac{1}{\varphi(z)} \idm  )^{-2}     ]}{n -  {\rm df}_2(1/\varphi(z)) }$.
  
 \paragraph{Letting $\lambda \to 0$ for $\gamma > 1$}
 Following arguments from~\cite[Lemma 6.2]{dobriban2018high}, in the high-dimensional situation where $\gamma>1$, we can take the limit $\lambda=0$, with the implicit regularization parameter $\kappa(0)>0$ defined in \mysec{rmt_inter}, which is such that ${\rm df}_1(\kappa(0))=n$.
 This works for the kernel version since we can write
 $$\hS ( \hS - z \idm)^{-1} = X^\top X ( X^\top X - nz \idm)^{-1} = X^\top  ( XX^\top   - nz \idm)^{-1} X
 = \Sigma^{1/2} Z^\top (Z \Sigma Z^\top - nz \idm)^{-1} \Sigma^{1/2},  $$
 which makes sense even with $z=0$, as the kernel matrix $XX^\top$ is then asymptotically almost surely invertible (since $\Sigma$ is invertible, and $ZZ^\top$ almost surely is~\cite{bai2008limit}). This will be used in the over-parameterized regime in \mysec{minnorm} and for random projections in \mysec{rp}.

 \paragraph{Letting $\lambda \to 0$ for $\gamma <1$} In this situation, $\kappa(\lambda)$ tends to zero, and we can use \eq{trA2} and \eq{trAB2} instead, that is, 
  $
\tr \big[ A \big( \hS + \lambda \idm \big)^{-1} \big] 
 \sim   \frac{\kappa(\lambda) }{\lambda  } \tr \big[ A \big(  \S + \kappa(\lambda)\idm \big)^{-1} ],
$ 
with $\frac{\kappa(\lambda) }{\lambda  } \sim \frac{1}{1-\gamma}$ when $\lambda$ goes to zero, and $\kappa(0)=0$, leading to 
 \BEA
\label{eq:traces1cov} \tr \big[ A  \hS  ^{-1} \big] 
& \sim &  \frac{1}{1-d/n  } \tr  [ A    \S  ^{-1} ].
\EEA
Equipped with the proper random matrix theory tools, we can apply them to least-squares regression, starting with ridge regression in \mysec{ridge}, its limit when $\lambda \to 0$ in \mysec{minnorm}, and then with random projections in \mysec{rp}.

\section{Analysis of ridge regression}
\label{sec:ridge}
We consider the ridge regression estimator, obtained as the unique minimizer of $\frac{1}{n} \sum_{i=1}^n ( y_i - x_i^\top \theta)^2 + \lambda \| \theta\|_2^2$, which is equal to:
$$
\hat{\theta} = (X^\top X + n \lambda \idm)^{-1} X^\top y= X^\top (XX^\top  + n \lambda \idm)^{-1}  y.
$$
In the fixed design framework, its analysis is explicit and leads to usual bias/variance trade-offs based on simple quantities.

\subsection{Fixed design analysis of ridge regression}
In the fixed design set-up where inputs $x_1,\dots,x_n$ are assumed deterministic, we obtain an expected excess risk, with $\S$ replaced with~$\hS$, which considerably simplifies the analysis~(see, e.g.,~\cite{hsu2012random}):
$$
\E_\varepsilon \big[ ( \hat{\theta} - \theta_\ast)^\top \hS 
( \hat{\theta} - \theta_\ast) \big]=
\lambda^2 \theta_\ast^\top ( \hS+ \lambda \idm )^{-2}   \hS\theta_\ast + \frac{\sigma^2}{n} \tr \big[   \hS^2 (\hS +\lambda \idm)^{-2} \big].
$$
The (squared) bias term $\lambda^2 \theta_\ast^\top ( \hS+ \lambda \idm )^{-2}   \hS\theta_\ast$ is increasing in $\lambda$, and depends on how the true~$\theta_\ast$ aligns with eigenvectors of $\hS$, and ``source conditions'' are typically  used to characterized this alignment~\cite{FCM:Caponetto+Vito:2007}.

This leads us to introduce the two classical different notions degrees of freedom ${\rm df}_1(\lambda) = \tr\big[ \S ( \S + \lambda\idm)^{-1} \big]$ and ${\rm df}_2(\lambda) =  \tr\big[ \S^2 ( \S + \lambda\idm)^{-2} \big]$ as key quantities~\cite{hsu2012random}. Typically, they behave similarly when $\lambda$ tends to zero (in particular, they are both equal to the rank of $\Sigma$ for $\lambda=0$). We will see in \mysec{minnorm} that when they differ significantly, this has consequences regarding the relevance of the end of the double descent curve.

Our goal is to obtain similar results to those for fixed design, using degrees of freedom and (squared) bias of the form $\lambda^2 \theta_\ast^\top ( \S+ \lambda \idm )^{-2}   \S\theta_\ast $. While bounds can be obtained in expectations~\cite{mourtada2022elementary} or high probability~\cite{FCM:Caponetto+Vito:2007}, we aim here at getting asymptotic equivalents.

\subsection{Random design analysis of ridge regression}
In this section, we recover the results from~\cite{dobriban2018high,mourtada2022elementary,bartlett_montanari_rakhlin_2021} with an explicit interpretation in terms of degrees of freedom.

We have, separating the noise from the part coming from $\theta_\ast$:
 \BEA
 \label{eq:ridgetheta}
 \hat{\theta} & = &  (X^\top X + n \lambda \idm)^{-1} X^\top y 
 = (X^\top X + n \lambda \idm)^{-1} X^\top X \theta_\ast + (X^\top X + n \lambda \idm)^{-1} X^\top \varepsilon.
 \\
\notag & = & 
 (\hS +   \lambda \idm)^{-1} \hS \theta_\ast + (\hS +  \lambda \idm)^{-1} \frac{X^\top \varepsilon}{n}.
 \EEA
 This leads to the following proposition, with the same expressions as~\cite[Theorem 4.13]{bartlett_montanari_rakhlin_2021} (see also~\cite{cheng2022dimension} for the same expressions in a more general context):
\begin{proposition}
Assume \textbf{(A1)}, \textbf{(A2)},  \textbf{(A3)}, and \textbf{(A4)}. For the ridge regression estimator in \eq{ridgetheta}, we have:
 \BEAS
 \E_\varepsilon \big[ \mathcal{R}^{({\rm var})}(\hat{\theta}) \big]
 & \sim & 
  \frac{\sigma^2}{n}  {\rm df}_2(\kappa(\lambda))
 \cdot
  \frac{1}{1 - \frac{1}{n}  {\rm df}_2(\kappa(\lambda))}
\\
 \mathcal{R}^{({\rm bias})}(\hat{\theta}) 
   & \sim & \kappa(\lambda)^2 \theta_\ast ^\top \S ( \S + \kappa(\lambda) \idm)^{-2} \theta_\ast 
 \cdot   \frac{1}{1 - \frac{1}{n}  {\rm df}_2(\kappa(\lambda))},
 \EEAS
with $\kappa(\lambda) $ related to $\lambda$ by $\kappa(\lambda) \big( 1 - \frac{1}{n} {\rm df}_1(\kappa(\lambda)) \big) \sim \lambda$.
\end{proposition}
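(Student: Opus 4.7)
The plan is to decompose the prediction error of $\hat\theta$ in \eq{ridgetheta} into its bias and variance contributions and express each as a spectral trace on which I can invoke Proposition~\ref{prop:spectral}. Setting $\varepsilon=0$ gives $\hat\theta - \theta_\ast = -\lambda (\hS+\lambda \idm)^{-1}\theta_\ast$, so that $\mathcal{R}^{(\rm bias)}(\hat\theta) = \lambda^2 \tr\big[\theta_\ast\theta_\ast^\top (\hS+\lambda\idm)^{-1}\S(\hS+\lambda\idm)^{-1}\big]$; setting $\theta_\ast=0$ and using $\E[\varepsilon\varepsilon^\top]=\sigma^2\idm$ gives $\E_\varepsilon[\mathcal{R}^{(\rm var)}(\hat\theta)] = \tfrac{\sigma^2}{n}\tr\big[\hS(\hS+\lambda\idm)^{-1}\S(\hS+\lambda\idm)^{-1}\big]$. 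I will apply Proposition~\ref{prop:spectral} to each trace at $z=-\lambda$, using that $1/\varphi(-\lambda)=\kappa(\lambda)$ together with the self-consistency equation $\lambda\sim\kappa(\lambda)\big(1-\tfrac{1}{n}{\rm df}_1(\kappa(\lambda))\big)$ from \eq{kappa} to simplify.

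For the bias, \eq{trAB2} applies directly with $A=\theta_\ast\theta_\ast^\top$ and $B=\S$: assumption (A4) supplies the limit measure $\nu$ for $A$ and the boundedness of $\|\theta_\ast\|$, while (A3) supplies the analogues for $B$. Since $\S$ commutes with $(\S+\kappa(\lambda)\idm)^{-1}$, the first term of \eq{trAB2} collapses to $(\kappa(\lambda)/\lambda)^2\,\theta_\ast^\top\S(\S+\kappa(\lambda)\idm)^{-2}\theta_\ast$. Recognizing $\tr[\S^2(\S+\kappa\idm)^{-2}] = {\rm df}_2(\kappa)$, the correction term equals the same expression multiplied by ${\rm df}_2(\kappa)/(n-{\rm df}_2(\kappa))$. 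Multiplying by $\lambda^2$ cancels the $\lambda^2$-denominator, and the two contributions sum to $\kappa(\lambda)^2\theta_\ast^\top\S(\S+\kappa(\lambda)\idm)^{-2}\theta_\ast\cdot(1-{\rm df}_2(\kappa)/n)^{-1}$, matching the claimed bias formula.

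The variance trace has $\hS$ sitting in the middle and does not fit any template of Proposition~\ref{prop:spectral} in one stroke. I use the identity $\hS(\hS+\lambda\idm)^{-1}=\idm-\lambda(\hS+\lambda\idm)^{-1}$ to split the trace as $\tr[\S(\hS+\lambda\idm)^{-1}] - \lambda\tr[\S(\hS+\lambda\idm)^{-2}]$. The first piece is $(\kappa/\lambda)\,{\rm df}_1(\kappa)$ by \eq{trA2} with $A=\S$. The second, obtained from \eq{trAB2} with $A=\S$ and $B=\idm$, involves $\tr[\S(\S+\kappa\idm)^{-2}]$, which equals $({\rm df}_1(\kappa)-{\rm df}_2(\kappa))/\kappa$ by a direct expansion of $\S(\S+\kappa\idm)^{-1}=\idm-\kappa(\S+\kappa\idm)^{-1}$, and simplifies to $(\kappa/\lambda)({\rm df}_1-{\rm df}_2)\cdot n/(n-{\rm df}_2)$. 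Substituting the self-consistency relation in the form $\kappa/\lambda\sim n/(n-{\rm df}_1(\kappa))$, the two terms combine algebraically to $n\,{\rm df}_2(\kappa)/(n-{\rm df}_2(\kappa))$, and multiplying by $\sigma^2/n$ yields the claimed variance equivalent.

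The one non-routine point is the subtraction step in the variance calculation: asymptotic equivalence is not preserved by subtraction in general, so I must verify that the leading orders of the two spectral traces do not cancel. They do not, because both traces and their difference $n\,{\rm df}_2(\kappa)/(n-{\rm df}_2(\kappa))$ are of order $d$ (so long as ${\rm df}_2(\kappa)>0$, guaranteed by the invertibility of $\S$ in (A3)); the $o(1)$-remainders of the two individual equivalents are thus dominated by this common order, and the final equivalence survives.
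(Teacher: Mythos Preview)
Your proof is correct and essentially identical to the paper's: you use the same bias/variance decomposition, split the variance trace via $\hS(\hS+\lambda\idm)^{-1}=\idm-\lambda(\hS+\lambda\idm)^{-1}$, and invoke \eq{trA2} and \eq{trAB2} with the same choices of $A,B$ (up to swapping their roles, which is immaterial by trace symmetry), followed by the identity $\kappa\tr[\S(\S+\kappa\idm)^{-2}]={\rm df}_1(\kappa)-{\rm df}_2(\kappa)$ and the self-consistency relation. Your explicit remark about the validity of subtracting asymptotic equivalents is a nice addition that the paper leaves implicit.
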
 
 \begin{proof}
The variance term is exactly the same as the one from~\cite{dobriban2018high}, and we simply provide here a reinterpretation with degrees of freedom. We obtain it by taking expectations starting from \eq{ridgetheta} to get
$ \E_\varepsilon \big[ \mathcal{R}^{({\rm var})}(\hat{\theta}) \big]
= \frac{\sigma^2}{n} \tr \big[ \S    (\hS +  \lambda \idm)^{-2} \hS \big]$. We can then use \eq{trA2} and \eq{trAB2} with $A = \idm$, $B = \Sigma$, and $z = - \lambda$, to get, using $\kappa(\lambda) \tr \big[ \S    (\S +  \kappa(\lambda) \idm)^{-2}  \big]
=  {\rm df}_1(\kappa(\lambda) ) -   {\rm df}_2(\kappa(\lambda) )$:
\BEAS
\E_\varepsilon \big[ \mathcal{R}^{({\rm var})}(\hat{\theta}) \big]
& = &\textstyle\frac{\sigma^2}{n} \tr \big[ \S    (\hS +  \lambda \idm)^{-2} \hS \big]
=  \frac{\sigma^2}{n}\tr \big[ \S    (\hS +  \lambda \idm)^{-1}   \big]
- \lambda \frac{\sigma^2}{n} \tr \big[ \S    (\hS +  \lambda \idm)^{-2}   \big] \\
& \sim & \textstyle \frac{\sigma^2}{n}  \frac{ \kappa(\lambda)}{\lambda} \tr \big[\S (\S + \kappa(\lambda)\idm)^{-1} \big]
-    \frac{\sigma^2}{n} \frac{ \kappa(\lambda)^2}{\lambda}   \tr \big[ \S   \big( \S + \kappa(\lambda) \idm \big)^{-2}     \big] 
 \\
  & & \textstyle
   -  \frac{\sigma^2}{n}\frac{ \kappa(\lambda)^2}{\lambda}   \tr \big[
    \S^2    \big( \S +\kappa(\lambda) \idm \big)^{-2}    \big]
     \cdot 
     \tr \big[
   \S    \big( \S + \kappa(\lambda) \idm \big)^{-2}    \big]
    \cdot   \frac{1}{ n -  {\rm df}_2(\kappa(\lambda) )  } \\
    & = & \textstyle\frac{\sigma^2}{n} \frac{ \kappa(\lambda)}{\lambda}  {\rm df}_2(\kappa(\lambda))
       -  \frac{\sigma^2}{n}   \frac{ \kappa(\lambda)^2}{\lambda}       \tr \big[
   \S    \big( \S + \kappa(\lambda) \idm \big)^{-2}    \big]
    \cdot   \frac{ {\rm df}_2(\kappa(\lambda) )}{ n -  {\rm df}_2(\kappa(\lambda) )  } \\
    & = &\textstyle \frac{\sigma^2}{n}\frac{ \kappa(\lambda)}{\lambda} {\rm df}_2(\kappa(\lambda))
       -  \frac{\sigma^2}{n}  \frac{ \kappa(\lambda)}{\lambda}        \big(
         {\rm df}_1(\kappa(\lambda) ) -   {\rm df}_2(\kappa(\lambda) )
       \big)
    \cdot    \frac{ {\rm df}_2(\kappa(\lambda) )}{ n -  {\rm df}_2(\kappa(\lambda) )  } \\
   & = & \textstyle\frac{\sigma^2}{n}\frac{ \kappa(\lambda)}{\lambda}   \frac{ {\rm df}_2(\kappa(\lambda) )
   \big( n - {\rm df}_1(\kappa(\lambda) ) )}{ n -  {\rm df}_2(\kappa(\lambda) )  } =  {\sigma^2}{}   \frac{ {\rm df}_2(\kappa(\lambda) )
   }{ n -  {\rm df}_2(\kappa(\lambda) )  } .
\EEAS
For the bias term, we have:
\BEAS
 \mathcal{R}^{({\rm bias})}(\hat{\theta}) 
 & = & \big\| \S^{1/2} \big( (\hS +   \lambda \idm)^{-1} \hS - \idm
 \big) \theta_\ast \|_2^2
  = \lambda^2 \theta_\ast^\top (\hS +   \lambda \idm)^{-1}\Sigma (\hS +   \lambda \idm)^{-1} \theta_\ast.
\EEAS
We then apply \eq{trAB2} with $A = \Sigma$ and $B = \theta_\ast \theta_\ast^\top$, which applies because of Assumption \textbf{(A4)}, to get:
\BEAS
\mathcal{R}^{({\rm bias})}(\hat{\theta}) 
 & = &\textstyle\kappa(\lambda)^2       \theta_\ast^\top  \big( \S + \kappa(\lambda)\idm \big)^{-2}  \S \theta_\ast   
 \\
 & &  \textstyle\hspace*{1cm} + \kappa(\lambda)^2  \tr \big[
    \S^2    \big( \S +\kappa(\lambda)\idm \big)^{-2}    \big]
     \cdot 
      \theta_\ast^\top    \big( \S + \kappa(\lambda)\idm \big)^{-2}  \Sigma  \theta_\ast
    \cdot   \frac{1}{ n -  {\rm df}_2(\kappa(\lambda))  } \\
    & = &\textstyle\kappa(\lambda)^2       \theta_\ast^\top  \big( \S + \kappa(\lambda)\idm \big)^{-2}  \S \theta_\ast 
   \cdot \big( 1 + \frac{{\rm df}_2(\kappa(\lambda)) }{ n -  {\rm df}_2(\kappa(\lambda))  } \big),\EEAS
   which concludes the proof.
\end{proof}
 
Up to the term $  \frac{1}{1 -    {\rm df}_2(\kappa(\lambda))/n}$, we exactly recover the fixed design analysis for the new larger regularization parameter $\kappa(\lambda)$. Note that in most situations, for the optimal regularization parameter, we usually have $ {\rm df}_1(\kappa(\lambda)) \ll n$ and $ {\rm df}_2(\kappa(\lambda)) \ll n$ so that the exploding term disappears.

We thus see two effects when we go from fixed design to random design: (1) an additional self-induced regularization due to moving from $\lambda$ to $\kappa(\lambda) \geqslant \lambda$, and (2) an explosion of the excess risk if the degrees of freedom get too large.

In the next section, we consider the limit when $\lambda$ tends to zero.

\section{Minimum norm least-square estimation}
\label{sec:minnorm}

The ridge regression estimator converges to the minimum $\ell_2$-norm estimator when $\lambda$ tends to zero. It turns out that this is precisely the estimator found by gradient descent started from zero~\cite{gunasekar2018characterizing}. We consider first the under-parameterized case ($\gamma<1$) and then the over-parameterized one ($\gamma>1$).

\subsection{Under-parameterized regime (ordinary least-squares)}

\label{sec:ols}
When $\gamma < 1$ (that is, $n>d$), then the OLS estimator is 
$\hat{\theta} = (X^\top X)^{-1} X^\top y = (X^\top X)^{-1} X^\top (X \theta_\ast + \varepsilon)
= \theta_\ast + (X^\top X)^{-1} X^\top \varepsilon,$
and thus we have $\mathcal{R}^{({\rm bias})}(\hat{\theta})=0$, and:
$$\textstyle
\E_\varepsilon \big[ \mathcal{R}^{({\rm var})}(\hat{\theta})\big] = \S^2 \tr \big[ X  (X^\top X)^{-1}  \S  (X^\top X)^{-1} X^\top \big] = \frac{\sigma^2}{n} \tr \big[   \S  \hS^{-1} \big].
$$
Using \eq{traces1cov}, we  obtain the classical equivalent $  \sigma^2 \frac{ \gamma}{1-\gamma} \sim \sigma^2 \frac{ d}{n - d}$, as derived, e.g.,  in~\cite{hastie2019surprises}.
Note that for Gaussian data, this is, in fact, almost an equality, that is, 
$\E_\varepsilon \big[ \mathcal{R}^{({\rm var})}(\hat{\theta})\big] = \sigma^2 \frac{ d}{n - d - 1}$ for $n>d+1$.

\subsection{Over-parameterized regime}
\label{sec:ridge_overparam}

We now consider the case $\gamma>1$ (that is, $d>n$).
We can see it as the limit when $\lambda$ tends to zero within ridge regression. This is exactly what was obtained in~\cite{hastie2019surprises} (in a non-asymptotic framework), here with an interpretation in terms of degrees of freedom. We obtain, with $\kappa(0)$ such that $ {\rm df}_1(\kappa(0)) = n$:
\BEAS
 \E_\varepsilon \big[ \mathcal{R}^{({\rm var})}(\hat{\theta}) \big]
   & \sim & 
 \frac{\sigma^2}{n}  {\rm df}_2(\kappa(0))
 \cdot
  \frac{1}{1 - \frac{1}{n}  {\rm df}_2(\kappa(0))}
\\
 \mathcal{R}^{({\rm bias})}(\hat{\theta}) 
    & \sim & \kappa(0)^2 \theta_\ast ^\top \S ( \S + \kappa(0) \idm)^{-2} \theta_\ast 
 \cdot   \frac{1}{1 - \frac{1}{n}  {\rm df}_2(\kappa(0))}.
 \EEAS

Following~\cite{bartlett2020benign,hastie2019surprises}, we can try to understand when the over-parameterized limit with no regularization makes statistical sense, with two questions in mind: (1) does it lead to catastrophic over-fitting? (2) can it lead to a good performance? The answers to these questions will depend on how ${\rm df}_1(\kappa(\lambda)) $ and ${\rm df}_2(\kappa(\lambda)) $ are related. Since ${\rm df}_1(\kappa(\lambda)) =n$, we have ${\rm df}_2(\kappa(\lambda)) \leqslant {\rm df}_1(\kappa(\lambda)) = n$, but how much smaller?

\paragraph{Equivalent degrees of freedom} In many standard situations, the two degrees of freedom are constants away from each other, in particular in the infinite-dimensional cases described at the end of \mysec{implicitreg}. Thus the variance term is proportional to $\sigma^2$, while the bias term is proportional to
$\kappa(\lambda)^2 \theta_\ast ^\top \S ( \S + \kappa(\lambda) \idm)^{-2} \theta_\ast$. There is no catastrophic overfitting, but the variance term cannot go to zero as $n$ tends to infinity, and we cannot expect a good performance when $\sigma$ is far from zero. However, in noiseless problems where $\sigma=0$, the bias term can lead to a better performance than what can be obtained with under-parameterized problems (see also \mysec{rp}).

\paragraph{Unbalanced degrees of freedom} If ${\rm df}_2(\kappa(\lambda)) \ll {\rm df}_1(\kappa(\lambda)) = n$, then the variance term can indeed go to zero when $n$ tends to infinity. This happens only in particular situations thoroughly described by \cite{bartlett2020benign,hastie2019surprises,richards2021asymptotics}.

\section{Random projections}
\label{sec:rp}

We consider a random projection matrix $S \in \rb^{d \times m}$, sampled independently from $X$ with the following assumptions:
\BIT
\item[\textbf{(A5)}] $S \in \rb^{d\times m}$ has sub-Gaussian i.i.d.~components with mean zero and unit variance.
\item[\textbf{(A6)}] The number of projections $m$ tends to infinity with $  \frac{m}{n}$ tending to $\delta > 0$.
\EIT
As for the linear regression assumptions, we do not assume Gaussian random projections, and in all of our experiments, we used Rademacher random variables in $\{-1,1\}$. Given the matrix $S$, we consider projecting each covariate $x \in \rb^d$ to $S^\top x \in \rb^m$. Thus, if $\hat{\eta} \in \rb^m$ is the minimum-norm minimizer of $\| y - X S \eta\|_2^2$, we consider $\hat{\theta} = S \hat{\eta} \in \rb^d$. Note that this is different from applying the random projection on the left of $y$ and $X$, which is often referred to as ``sketching''~\cite{dobriban2019asymptotics,raskutti2016statistical}.

The asymptotic performance can be characterized as follows (again, apart from the expectation with respect to the noise variable $\varepsilon$, all results are meant almost surely).

\begin{proposition}
Assume \textbf{(A1)}, \textbf{(A2)},  \textbf{(A3)}, \textbf{(A4)}, \textbf{(A5)},  \textbf{(A6)}. For the minimum norm least-squares estimator $\hat{\theta}$ based on random projections, we have for the under-parameterized regime ($m<n$):
\BEAS
\E_\varepsilon \big[ \mathcal{R}^{^{\rm var}} (\hat{\theta})\big]
& \sim &  \frac{\sigma^2 m}{n-m} = \frac{1}{1 - \frac{m}{n}}  \cdot \frac{\sigma^2 m}{n}  
\\ 
\mathcal{R}^{^{\rm bias}} (\hat{\theta})  
& \sim & \frac{1}{1 - \frac{m}{n}} \cdot \kappa_m \theta_\ast^\top \S ( \S + \kappa_m \idm)^{-1} \theta_\ast,
\EEAS
 with $\kappa_m$ defined by   ${\rm df_1}(\kappa_m) \sim m$. In the over-parameterized regime, 
  we get, for  $\kappa_n$ such that ${\rm df}_1(\kappa_n) \sim n$:
  \BEAS
\E_\varepsilon \big[ \mathcal{R}^{^{\rm var}} (\hat{\theta}) \big]
& \sim & \frac{\sigma^2}{n} \cdot \frac{ {\rm df}_2(\kappa_n)   }{1  -  \frac{1}{n}{\rm df}_2(\kappa_n) }
+  \sigma^2 \frac{n}{m-n}   
\\
 \mathcal{R}^{^{\rm bias}} (\hat{\theta}) 
& \sim & \kappa_n^2 \theta_\ast^\top \S (  \S + \kappa_n \idm)^{-2} \theta_\ast \cdot 
 \frac{ 1 }{ 1 - \frac{1}{n} {\rm df}_2(\kappa_n)}
 + \kappa_n \theta_\ast^\top \S ( \S+ \kappa_n\idm)^{-1} \theta_\ast \cdot \frac{n}{m-n} .
\EEAS
\end{proposition}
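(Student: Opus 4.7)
My strategy is to view $\hat\theta=S\hat\eta$ as arising from a minimum-norm least-squares fit in the projected space of dimension $m$, with design matrix $\tilde X:=XS = Z(\Sigma^{1/2}S)$ and effective population covariance $\tilde\Sigma:=S^\top\Sigma S$, and to apply the random matrix tools of Section~\ref{sec:rmt} in two stages. In stage one (the \textbf{$X$-RMT}), I condition on $S$ and treat $\tilde\Sigma$ and $\tilde{\hat\Sigma}:=S^\top\hat\Sigma S$ as the population/empirical covariances in dimension $m$, with aspect ratio $m/n\to\delta$. In stage two (the \textbf{$S$-RMT}), I invoke Proposition~\ref{prop:spectralK} with $S^\top\Sigma^{1/2}\in\mathbb{R}^{m\times d}$ playing the role of the data matrix (its rows have population covariance $\Sigma$, aspect ratio $d/m\to\gamma/\delta$), so that $\tilde\Sigma$ is exactly the associated kernel matrix and the $z=0$ implicit regularization satisfies the self-consistency ${\rm df}_1(\kappa)\sim m$, giving $\kappa=\kappa_m$ in the regime $m<d$.

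The key algebraic trick is to introduce the population projected coefficient $\tilde\eta_S:=(S^\top\Sigma S)^{-1}S^\top\Sigma\theta_\ast$ and the approximation error $\xi:=\theta_\ast-S\tilde\eta_S$, so that $S^\top\Sigma\xi=0$ by construction. This orthogonality kills the cross-term in
\[
\mathcal{R}^{\rm bias}(\hat\theta) \;=\; \xi^\top\Sigma\xi \;+\; (\hat\eta-\tilde\eta_S)^\top\tilde\Sigma(\hat\eta-\tilde\eta_S),
\]
and applying \eqref{eq:trA1K} at $z=0$ under the $S$-RMT with $A=\Sigma\theta_\ast\theta_\ast^\top\Sigma$ yields $\theta_\ast^\top\Sigma S(S^\top\Sigma S)^{-1}S^\top\Sigma\theta_\ast\sim\theta_\ast^\top\Sigma^2(\Sigma+\kappa I)^{-1}\theta_\ast$ for the appropriate $\kappa$; combined with the identity $\Sigma-\Sigma^2(\Sigma+\kappa I)^{-1}=\kappa\Sigma(\Sigma+\kappa I)^{-1}$ this gives the clean formula $\xi^\top\Sigma\xi\sim\kappa\,\theta_\ast^\top\Sigma(\Sigma+\kappa I)^{-1}\theta_\ast$.

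In the under-parameterized case ($m<n$), $\hat\eta$ is ordinary OLS in dimension $m$, so the variance collapses to $(\sigma^2/n)\,{\rm tr}[\tilde\Sigma\tilde{\hat\Sigma}^{-1}]\sim\sigma^2m/(n-m)$ by \eqref{eq:traces1cov} applied conditional on $S$; for the bias, $\hat\eta-\tilde\eta_S=\tilde{\hat\Sigma}^{-1}S^\top\hat\Sigma\xi$, and setting $u=Z\Sigma^{1/2}\xi\in\mathbb{R}^n$ and $V=Z\Sigma^{1/2}S$, the identity $S^\top\Sigma\xi=0$ makes $u$ uncorrelated with the columns of $V$ (independent in the Gaussian case, with the deterministic-equivalent machinery of Section~\ref{sec:rmt} covering the sub-Gaussian extension), so taking $Z$-expectations conditional on $S$ reduces the estimation piece to $(\xi^\top\Sigma\xi/n)\,{\rm tr}[\tilde\Sigma\tilde{\hat\Sigma}^{-1}]\sim\xi^\top\Sigma\xi\cdot m/(n-m)$; summing with the approximation piece yields $\xi^\top\Sigma\xi/(1-m/n)$, matching the stated expression with $\kappa=\kappa_m$. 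In the over-parameterized case ($m>n$), I pass through the ridge estimator $\hat\eta_\lambda=(\tilde{\hat\Sigma}+\lambda I_m)^{-1}S^\top X^\top y/n$ and take $\lambda\downarrow 0$: Section~\ref{sec:ridge_overparam} applies conditional on $S$ with self-induced regularization $\tilde\kappa_0$ defined by ${\rm tr}[\tilde\Sigma(\tilde\Sigma+\tilde\kappa_0 I_m)^{-1}]=n$, equivalently $m-\tilde\kappa_0\,{\rm tr}[(\tilde\Sigma+\tilde\kappa_0 I)^{-1}]=n$. Combining with the $S$-RMT Stieltjes relation ${\rm tr}[(\tilde\Sigma+\tilde\kappa_0 I)^{-1}]\sim 1/\beta$ and $\beta(m-{\rm df}_1(\beta))=\tilde\kappa_0$ forces ${\rm df}_1(\beta)=n$, hence $\beta=\kappa_n$ and $\tilde\kappa_0=(m-n)\kappa_n$; the two terms of the claimed bias then come from combining the ridge-bias formula of Section~\ref{sec:ridge_overparam} applied to target $\tilde\eta_S$ (first term) with the reinterpretation of $\xi^\top\Sigma\xi$ as the ``effective noise variance'' seen by the projected regression when $\varepsilon=0$ (second term), and similarly for the variance with true noise $\sigma^2$.

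The main obstacle is this two-level random matrix analysis in the over-parameterized case: one must evaluate kernel-trace quantities such as $\tilde{\rm df}_2^{(X)}(\tilde\kappa_0)={\rm tr}[\tilde\Sigma^2(\tilde\Sigma+\tilde\kappa_0 I)^{-2}]$ (and its bias analogue) by careful application of the two-factor identity \eqref{eq:trAB1K}. The $1/(n-{\rm df}_2^{(S)}(\cdot))$ inflation factor coming out of that identity is precisely what generates the extra $n/(m-n)$ terms of the statement, and the algebraic relation $\tilde\kappa_0=(m-n)\kappa_n$ is what collapses the final expressions onto the single implicit-regularization parameter $\kappa_n$.
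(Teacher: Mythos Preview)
Your decomposition via $\tilde\eta_S=(S^\top\Sigma S)^{-1}S^\top\Sigma\theta_\ast$ and $\xi=\theta_\ast-S\tilde\eta_S$ is elegant, and the algebra you outline does close: in particular your identification $\tilde\kappa_0=(m-n)\kappa_n$ in the over-parameterized regime is correct, and carrying out the two-factor identity \eqref{eq:trAB1K} on $\widetilde{\rm df}_2(\tilde\kappa_0)$ and on $\tilde\kappa_0^2\tilde\eta_S^\top\tilde\Sigma(\tilde\Sigma+\tilde\kappa_0 I)^{-2}\tilde\eta_S$ does collapse to the stated expressions after a short computation. Your route, however, differs from the paper's in two linked respects. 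First, in the over-parameterized regime the paper does the expansions in the \emph{opposite} order: it conditions on $X$ and applies Proposition~\ref{prop:spectralK} to $S$ (which has i.i.d.~entries and is independent of $X$, so the hypotheses hold verbatim with ``covariance'' $\hat\Sigma$), obtains an expression in $X$ only, and then applies the $X$-RMT. Second, in both regimes the paper's bias step works directly on $Z$ via $Z^\top(ZCZ^\top+n\lambda I)^{-1}Z$ with $C=\Sigma^{1/2}SS^\top\Sigma^{1/2}$, again so that the i.i.d.-entry assumption \textbf{(A1)} is met by the actual random matrix being expanded.

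This is exactly where your argument has a genuine gap. Your step ``treat $X\xi$ as noise independent of $XS$ and apply the ridge formulas of Section~\ref{sec:ridge_overparam}'' is only valid as stated when $Z$ is Gaussian: the orthogonality $S^\top\Sigma\xi=0$ gives uncorrelatedness of $u=Z\Sigma^{1/2}\xi$ and $V=Z\Sigma^{1/2}S$, but not independence, and the ridge proposition you are invoking uses independence of noise and design. Equivalently, the projected design $\tilde X=XS$ does \emph{not} satisfy \textbf{(A1)} for sub-Gaussian $Z$ (its rows are i.i.d.~with covariance $\tilde\Sigma$, but not of the form $Z'\tilde\Sigma^{1/2}$ with $Z'$ having i.i.d.~entries), so neither Proposition~\ref{prop:spectral} nor the ridge result applies to it out of the box. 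The statement that ``the deterministic-equivalent machinery of Section~\ref{sec:rmt} covers the sub-Gaussian extension'' is a universality claim that is morally true but is not what Propositions~\ref{prop:spectral}--\ref{prop:spectralK} provide; you would have to prove it separately. The paper's ordering sidesteps this entirely: every application of Proposition~\ref{prop:spectralK} is to a matrix ($Z$ or $S$) that literally has i.i.d.~sub-Gaussian entries.
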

\begin{proof}
 We will consider the $\ell_2$-regularized estimator, with a regularization parameter $\lambda$ that we will let go to zero. The validity of such limits follows from the same arguments as~\cite[Lemma 6.2]{dobriban2018high}.
 We thus consider:
\BEAS
\hat{\theta} & = &  S ( S^\top X^\top X S + n \lambda \idm)^{-1} S^\top X^\top y\\
& = &  S ( S^\top X^\top X S + n \lambda \idm)^{-1} S^\top X^\top X\theta_\ast +
S ( S^\top X^\top X S + n \lambda \idm )^{-1} S^\top X^\top \varepsilon \\
& = &  M \theta_\ast +
S ( S^\top X^\top X S + n \lambda \idm )^{-1} S^\top X^\top \varepsilon,
\EEAS
with $M =  S ( S^\top X^\top X S + n\lambda \idm)^{-1} S^\top X^\top X$.

Conditioned on $S$ and $X$, the expected risk is equal to, for the variance part:
\BEA
\notag \E_{ \varepsilon} \big[ \mathcal{R}^{({\rm var})} (\hat{\theta})\big]
& \!\!= \!\!& 
 \sigma^2 \tr \big[ XS ( S^\top X^\top X S + n\lambda \idm )^{-1}  S^\top \S  S ( S^\top X^\top X S  + n\lambda \idm)^{-1} S^\top X^\top \big]
 \\
 \label{eq:varproof}  &= &   \sigma^2 \tr \big[   S^\top \S  S ( S^\top X^\top X S  + n\lambda \idm)^{-2}   S^\top X^\top X S \big]  ,
 \EEA
while, for the bias, we have:
\BEA
  \mathcal{R}^{({\rm bias})} (\hat{\theta})
 \notag & = &  \big(    M \theta_\ast - \theta_\ast \big)^\top \S  \big(    M \theta_\ast - \theta_\ast \big) 
  =  \theta_\ast ^\top \S \theta_\ast
+  
    \theta_\ast^\top M^\top \S  M \theta_\ast  
- 2 
    \theta_\ast ^\top M^\top \S  \theta_\ast  \\
  \label{eq:biasproof}  & = & 
   \theta_\ast ^\top \S \theta_\ast
   - 2 
    \theta_\ast ^\top     X^\top X S ( S^\top X^\top X S  + n\lambda \idm )^{-1} S^\top 
 \S  \theta_\ast 
 \\
\notag  & & \hspace*{1.5cm} 
 +  
    \theta_\ast^\top 
    X^\top X S ( S^\top X^\top X S + n\lambda \idm )^{-1} S^\top    
     \S  S ( S^\top X^\top X S  + n\lambda \idm )^{-1} S^\top X^\top X \theta_\ast   
  .
  \EEA

For the proof, we separate the two regimes $m<n$ and $m<n$. For both of them, we provide asymptotic expansions in two steps, first with respect to $X$ and then $S$ in the under-parameterized regime and vice-versa for the over-parameterized regime.

\paragraph{Under-parameterized regime: expansion with respect to $X$}
We consider $S$ fixed and use the random matrix theory arguments from \mysec{rmt} for~$X$.
We have  a covariance matrix $S^\top \Sigma S \in \rb^{m \times m}$ of rank $m$, so under-parameterized results apply, and we get
for the variance term (first term above), for $S$ fixed, where we can directly consider $\lambda=0$ (because of cancellations):
 $$\textstyle
\E_\varepsilon \big[\mathcal{R}^{(\rm var)}(\hat{\theta}) \big] =  \sigma^2\tr \big(   S^\top \S  S ( S^\top X^\top X S )^{-1}  \big)
\sim  \frac{\sigma^2}{n-m} \tr \big(   S^\top \S  S ( S^\top \Sigma S)^{-1}  \big) = \frac{\sigma^2m}{m-n},
 $$
 independently of the sketching matrix $S$. Note here that $S^\top \Sigma S$ is a random kernel matrix satisfying assumptions of \mysec{rmt}; thus, its spectral measure has a limit.
 
For the bias term, the computation is more involved. 
With $T = \S^{1/2}S$, and $X = \S^{1/2} Z$, it is equal to:
\BEAS
\mathcal{R}^{(\rm bias)}(\hat{\theta}) &= &\theta_\ast ^\top \S \theta_\ast
   - 2 
    \theta_\ast ^\top     \S^{1/2} Z^\top Z T ( T^\top  Z^\top Z T + n\lambda \idm )^{-1} T^\top 
 \S^{1/2}  \theta_\ast
 \\
 & & \hspace*{.5cm} 
 +  
    \theta_\ast^\top 
    \S^{1/2} Z^\top Z T ( T^\top   Z^\top Z T + n\lambda \idm )^{-1} T^\top    
    T ( T^\top   Z^\top Z T + n\lambda \idm )^{-1} T^\top Z^\top Z \S^{1/2} \theta_\ast   .
\EEAS
Using the matrix inversion lemma, we get:
 \BEAS
\mathcal{R}^{(\rm bias)}(\hat{\theta}) & = &\theta_\ast ^\top \S \theta_\ast
   - 2 
    \theta_\ast ^\top     \S^{1/2} {Z^\top  ( Z T T^\top  Z^\top  + n \lambda \idm )^{-1} Z} TT^\top 
 \S^{1/2}  \theta_\ast
 \\
 & & \hspace*{.5cm} 
 +  
    \theta_\ast^\top 
    \S^{1/2} {Z^\top  (  Z T T^\top   Z^\top + n \lambda \idm )^{-1} Z T T^\top    
    T  T^\top Z^\top ( Z T  T^\top   Z^\top + n \lambda \idm)^{-1} Z} \S^{1/2} \theta_\ast  .
      \EEAS
Denoting  $C = TT^\top$, we then have
 \BEAS
\mathcal{R}^{(\rm bias)}(\hat{\theta}) & = &\theta_\ast ^\top \S \theta_\ast
   - 2 
    \theta_\ast ^\top     \S^{1/2}  \textcolor{red}{  Z^\top  ( Z C  Z^\top  + n \lambda \idm )^{-1} Z} C  
 \S^{1/2}  \theta_\ast
 \\
 & & \hspace*{1.5cm} 
 +  
    \theta_\ast^\top 
    \S^{1/2}  \textcolor{red}{ Z^\top  (  Z  C   Z^\top + n \lambda \idm )^{-1} Z C^2 Z^\top ( Z C  Z^\top + n \lambda \idm)^{-1} Z  }  \S^{1/2} \theta_\ast  .
      \EEAS
To find expansions of the red terms above, we can directly use the results from \mysec{spectrace}, using \eq{trA1K} with $A = C\Sigma^{1/2} \theta_\ast \theta_\ast^\top \Sigma^{1/2}$, and \eq{trAB1K}  with $A=\Sigma^{1/2} \theta_\ast \theta_\ast^\top \Sigma^{1/2}$ and $B = C^2$, with the covariance matrix $C$, and thus with degrees of freedom and the implicit regularization parameter $\tilde{\kappa}(\lambda)$ associated to $C$.\footnote{We use a different notation with $\tilde{}$ , to avoid confusion with the same quantities with $\Sigma$.} We can apply Prop.~\ref{prop:spectralK} since $C = TT^\top = \Sigma^{1/2} SS^\top \Sigma^{1/2}$ has almost surely a limiting spectral measure and the resulting needed traces involving the matrices $A$ and $B$ have well-defined limits. We get:
\BEAS
\mathcal{R}^{(\rm bias)}(\hat{\theta})  & \sim &\theta_\ast ^\top \S \theta_\ast
   - 2 
    \theta_\ast ^\top     \S^{1/2}     (  C    + \tilde{\kappa}(\lambda) \idm )^{-1} C
 \S^{1/2}  \theta_\ast
 \\
 & & \hspace*{2cm} 
 +  
    \theta_\ast^\top 
    \S^{1/2}  ( C     + \tilde{\kappa}(\lambda) \idm )^{-1}   C 
    C (  C     + \tilde{\kappa}(\lambda) \idm )^{-1}   \S^{1/2} \theta_\ast    \\
 & & \hspace*{2cm} 
 +  
\tilde{\kappa}(\lambda)^2    
   \frac{ \theta_\ast^\top  \S^{1/2} 
 (   C  + \tilde{\kappa}(\lambda) \idm )^{-2}  \S^{1/2}\theta_\ast
 \cdot \tr \big[  C^2    ( C  + \tilde{\kappa}(\lambda) \idm)^{-2} \big]}{n - \widetilde{{\rm df}}_2(\tilde{\kappa}(\lambda))}
\\
& \sim &\theta_\ast ^\top \S^{1/2}
\big( \idm -C (  C     + \tilde{\kappa}(\lambda) \idm )^{-1}  \big)^2
\S^{1/2} \theta_\ast \\
  & & \hspace*{2cm} 
 +  
\tilde{\kappa}(\lambda)^2    
   \frac{ \theta_\ast^\top  \S^{1/2}
 (  C  + \tilde{\kappa}(\lambda) \idm )^{-2} \S^{1/2}\theta_\ast
 \cdot \tr \big[ C^2   ( C  + \tilde{\kappa}(\lambda) \idm)^{-2} \big]}{n - \widetilde{{\rm df}}_2(\tilde{\kappa}(\lambda))}.
 \EEAS
 When $\lambda$ goes to zero, we have $\tilde{\kappa}(\lambda) \to 0$, $\widetilde{{\rm df}}_2(\tilde{\kappa}(\lambda)) \to m$, as well as
 $C (  C     + \tilde{\kappa}(\lambda)\idm )^{-1} = TT^\top ( TT^\top +  \tilde{\kappa}(\lambda)  \idm)^{-1}
 = T (  T^\top T +  \tilde{\kappa}(\lambda)  \idm)^{-1} T^\top  \to \S^{1/2} S (S^\top \S S)^{-1} S^\top \S^{1/2}$,
 and $\tilde{\kappa}(\lambda) 
 (  C     + \tilde{\kappa}(\lambda)\idm )^{-1} = \idm - C (  C     + \tilde{\kappa}(\lambda)\idm )^{-1} \to \idm 
 - \S^{1/2} S (S^\top \S S)^{-1} S^\top \S^{1/2}$. This leads to:
\BEA
\notag  \mathcal{R}^{(\rm bias)}(\hat{\theta}) & \sim   & 
  \theta_\ast ^\top \S^{1/2}
\big( \idm -  \S^{1/2} S (S^\top \S S)^{-1} S^\top \S^{1/2}    \big)^2
\S^{1/2} \theta_\ast \\
\notag & & \hspace*{4cm} 
 +  
   \frac{ \theta_\ast^\top  \S^{1/2}
 (  \idm - T  (T^\top T  )^{-1} T^\top)^{2} \S^{1/2}\theta_\ast
 \cdot m}{n-m}
\\
\label{eq:biasA} &  =&    \theta_\ast^\top
\big(
\S\! -\! \S S (S^\top \S S)^{-1} S^\top \S
\big)
\theta_\ast \cdot \big( 1 \!+\! \frac{m}{n\!-\!m} \big)
  .
\EEA

\paragraph{Under-parameterized regime: full expansion}

Using results from \mysec{spectrace}, this time with $Z = S^\top$ and the covariance matrix $\Sigma$, with $\kappa_m $ defined by  ${\rm df_1}(\kappa_m) = m$,  we get from Prop.~\ref{prop:spectralK} the equivalent
$\S - \S S (S^\top \S S)^{-1} S^\top \S \sim \S - \S^{1/2} ( \S + \kappa_m \idm)^{-1} \S^{1/2}$, and thus, from \eq{biasA}, we get the desired result:
$
 \mathcal{R}^{^{\rm bias}} (\hat{\theta}) 
  \sim   \frac{1}{1 - m/n}  \kappa_m \theta_\ast^\top \S ( \S + \kappa_m \idm)^{-1} \theta_\ast.
  $
  
\paragraph{Over-parameterized regime: expansion with respect to $S$}
We have, from \eq{varproof}:
\BEAS
\E_{ \varepsilon} \big[ \mathcal{R}^{({\rm var})} (\hat{\theta})\big]
& = & 
   \frac{\sigma^2}{n}   \tr\big(  \S  \textcolor{red}{S (S^\top \hS S + \lambda \idm)^{-1}S^\top\hS S 
( S^\top \hS X S +  \lambda \idm )^{-1} S^\top} \big) .
\EEAS
To obtain an expansion of the red term, we can use Prop.~\ref{prop:spectralK} with covariance matrix
$\hS$ and thus degrees of freedom and $\tilde{\kappa}$ associated to $\hS$:
\BEAS
\E_{ \varepsilon} \big[ \mathcal{R}^{({\rm var})} (\hat{\theta})\big]
& \sim& \frac{\sigma^2}{n} \tr \big[ \S \hS ( \hS    + \tilde{\kappa}(\lambda) \idm)^{-2} \big]
+\frac{\sigma^2}{n}   \tilde{\kappa}(\lambda   ) ^2 
\frac{ \tr \big[ \S  ( \hS + \tilde{\kappa}(\lambda  )  \idm)^{-2}   \big] \cdot  \tr \big[ \hS  (\hS + \tilde{\kappa}(\lambda )  \idm)^{-2}    \big] }{m - \widetilde{{\rm df}}_2(\tilde{\kappa}(\lambda))}.
\EEAS
Using that $\tilde{\kappa}(\lambda) \to 0$ when $\lambda\to 0$,
$\hS ( \hS    + \tilde{\kappa}(\lambda) \idm)^{-2}  = n X^\top X ( X^\top X + n \tilde{\kappa}(\lambda) \idm)^{-2}
$ can be rewritten as $ n X^\top (XX^\top + n \tilde{\kappa}(\lambda) \idm)^{-2} X \to n X^\top ( XX^\top)^{-2} X$,
and $\tilde{\kappa}(\lambda)^2 
 ( \hS    + \tilde{\kappa}(\lambda) \idm)^{-2} = n^2 \tilde{\kappa}(\lambda)^2 (X^\top X + n \tilde{\kappa}(\lambda) \idm)^{-2}
 = ( \idm - X^\top (XX^\top + n \tilde{\kappa}(\lambda) \idm)^{-1} X)^2 \to ( \idm - X^\top (XX^\top)^{-1} X)^2
 =  ( \idm - X^\top (XX^\top)^{-1} X) $, we thus get:
\BEQ
\label{eq:varB} \E_{ \varepsilon} \big[ \mathcal{R}^{({\rm var})} (\hat{\theta})\big]
  \sim  \sigma^2 \tr \big[ \S X^\top \! ( XX^\top  )^{-2}  X \big]
+ \frac{ \tr \big[ \S  ( \idm\! -\! X^\top (XX^\top)^{-1} X) \big] \cdot  \tr \big[   ( XX^\top  )^{-1}    \big] }{m - n}.
\!\!
 \EEQ
We can now take care of the (squared) bias term with the same technique, with $\lambda \to 0$, starting from \eq{biasproof}:
\BEA
\notag
\mathcal{R}^{({\rm bias})}(\hat \theta) & = & \theta_\ast ^\top \S \theta_\ast
- 2 \theta_\ast^\top \S^{1/2} \textcolor{red}{S
( S^\top \hS S+  \lambda \idm )^{-1}  S^\top} \hS   \theta_\ast \\
\notag & & \hspace*{2cm} + \theta_\ast^\top
  \hS  \textcolor{red}{S  (S^\top \hS S + \lambda \idm )^{-1}   S^\top 
\S
S  ( S^\top X^\top X S+ n\lambda \idm)^{-1} S^\top} \hS
\theta_\ast 
\\
\notag & \sim & \theta_\ast ^\top \S \theta_\ast
- 2 \theta_\ast^\top \S^{1/2} 
(  \hS + \tilde{\kappa}(\lambda ) \idm )^{-1}   \hS   \theta_\ast  + \theta_\ast^\top
  \hS (  \hS + \tilde{\kappa}(\lambda  ) \idm )^{-1} 
\S
(  \hS + \tilde{\kappa}(\lambda  ) \idm )^{-1}  \hS
\theta_\ast \\
\notag&&  + \tilde{\kappa}(\lambda  ) ^2 
\frac{ \tr \big[ \S  ( \hS + \tilde{\kappa}(\lambda )  \idm)^{-2}   \big] \cdot \theta_\ast^\top \hS  ( \hS + \tilde{\kappa}(\lambda )  \idm)^{-2}    \hS \theta_\ast }{m -  \widetilde{{\rm df}}_2(\tilde{\kappa}(\lambda))}
\\ 
\notag & \sim & \big\| \S^{1/2} \big(
\idm - X^\top (  XX^\top   + \tilde{\kappa}(\lambda  ) \idm )^{-1} X  \big) \theta_\ast\big\|_2^2
\\
\notag & &\hspace*{2cm}  + \tilde{\kappa}(\lambda   ) ^2 
\frac{ \tr \big[ \S  ( X^\top X + \tilde{\kappa}(\lambda )  \idm)^{-2}   \big] \cdot \theta_\ast^\top X^\top X  ( X^\top X + \tilde{\kappa}(\lambda  )  \idm)^{-2}    X^\top X \theta_\ast }{ m - \widetilde{{\rm df}}_2(\tilde{\kappa}(\lambda))}
\\ 
\notag & \sim  & 
\theta_\ast^\top ( \idm - X^\top (XX^\top)^{-1}  X) \S  ( \idm - X^\top (XX^\top)^{-1}  X) \theta_\ast
  \\
 \label{eq:biasC} & & \hspace*{3cm} + \frac{1}{m-n} \theta_\ast^\top 
X^\top ( X  X^\top)^{-1}  X  \theta_\ast     \cdot \tr\big[ \S  ( \idm - X^\top (XX^\top)^{-1} X)\big] .
\EEA
 
\paragraph{Over-parameterized regime: full expansion}
For $ \kappa_n $ defined as ${\rm df}_1(\kappa_n) = n$ for the full covariance matrix $\Sigma$ (which is exactly the value of $\kappa_m$ above for $m=n$), we get, using Prop.~\ref{prop:spectralK}, with \eq{varB} and \eq{biasC}:
\BEAS
\E \big[ \mathcal{R}^{^{\rm var}} (\hat{\theta}) \big]
& \sim & \sigma^2 \frac{ {\rm df}_2(\kappa_n) }{ {\rm df}_1(\kappa_n) -  {\rm df}_2(\kappa_n)}
+  \sigma^2 \frac{n}{m-n}  
\\
 \mathcal{R}^{^{\rm bias}} (\hat{\theta})  
& \sim & \kappa_n^2 \theta_\ast^\top \S (  \S + \kappa_n\idm)^{-2} \theta_\ast \cdot 
 \frac{ {\rm df}_1(\kappa_n) }{ {\rm df}_1(\kappa_n) -  {\rm df}_2(\kappa_n)}
 +  \kappa_n \theta_\ast^\top \S ( \S+\kappa_n \idm)^{-1} \theta_\ast \cdot \frac{n}{m-n},
\EEAS
 which is the desired result.
\end{proof}
We can make the following observations:
\BIT
\item In the under-parameterized regime, we recover the traditional bias and variance terms divided by $ 1 - \frac{m}{n} $, which leads to the expected catastrophic over-fitting when $m$ is close to $n$. Moreover, while the variance term goes up from $m=0$ to $m=n$, the bias term has one decreasing term
$\kappa_m \theta_\ast^\top \S ( \S + \kappa_m \idm)^{-1} \theta_\ast$ and one increasing term $\big(1 - \frac{m}{n}\big)^{-1}$. In some cases (e.g., for $\theta_\ast$ and $\Sigma$ isotropic), the overall performance always goes up, but in many situations, we obtain the traditional U-shaped curve in the under-parameterized regime.

\item In the over-parameterized regime, the limit when $m$ tends to infinity is exactly the same as the limit $\lambda$ tending to zero for ridge regression in \mysec{ridge_overparam}, since $\kappa_n$ is exactly what was referred to as $\kappa(0)$. Moreover, we have, for both variance and bias, a decreasing function of $m$. Thus, once in this regime, it is always best to take $m$ as large as possible. Note that to achieve the performance for $m=\infty$, we can simply take $\hat{\theta} = X^\top ( XX^\top )^{-1} y$, and there is no need to solve a problem in dimension $m$ with~$m$ large.

\item Combining the two regimes, we indeed see an actual double descent in many scenarios. See illustrative experiments in \mysec{experiments}.
 \EIT

\section{Experiments}
\label{sec:experiments}

In this section, we present illustrative experiments to showcase our asymptotic equivalents from \mysec{rp}.\footnote{Matlab code to reproduce figures can be downloaded from {\small \url{https://www.di.ens.fr/~fbach/dd_rp.zip}}.}
 
\paragraph{Testing the asymptotic limit}
We consider a fixed spectral measure $\mu = \pi_1 \delta_{\sigma_1} + \pi_2 \delta_{\sigma_2}$ already considered by \cite{hastie2019surprises,richards2021asymptotics} and the fixed measure $\nu = \mu$ for the optimal predictor, for which we can compute all of the asymptotic equivalents in \mysec{rp} in closed form. We take $\gamma = d/n = 2$ and plot bias and variance as functions of $\delta = m/n$. We then compare them to experiments with finite $n$ (and the corresponding $d= \gamma n$ and $m = \delta n$), where we sample $\theta_\ast$ and $\Sigma$ from their distributions (with a matrix of eigenvectors uniformly at random in the set of orthogonal matrices). We have here, for $\delta \in [0,1]$,
$$
\kappa(\delta) = \frac{1}{2} \Big( \frac{\gamma}{\delta} ( \pi_1 \sigma_1 + \pi_2 \sigma_2) - \sigma_1 - \sigma_2 
+ \Big[(\frac{\gamma}{\delta} ( \pi_1 \sigma_1 + \pi_2 \sigma_2) - \sigma_1 - \sigma_2 )^2 + 4\sigma_1\sigma_2 ( \frac{\gamma}{\delta}-1)\Big]^{1/2} \Big).
$$
In \myfig{convergence_3examples}, we  can see that as $n$ gets larger, each realization of the experiment tends to the asymptotic limit, illustrating almost sure convergence (which we conjecture to be of order $O(1/\sqrt{n})$), while, when we consider expectations with respect to several realizations, we get a faster convergence (which we conjecture to be of order $O(1/ {n})$).

 \begin{figure}
\begin{center}
\includegraphics[scale=.4]{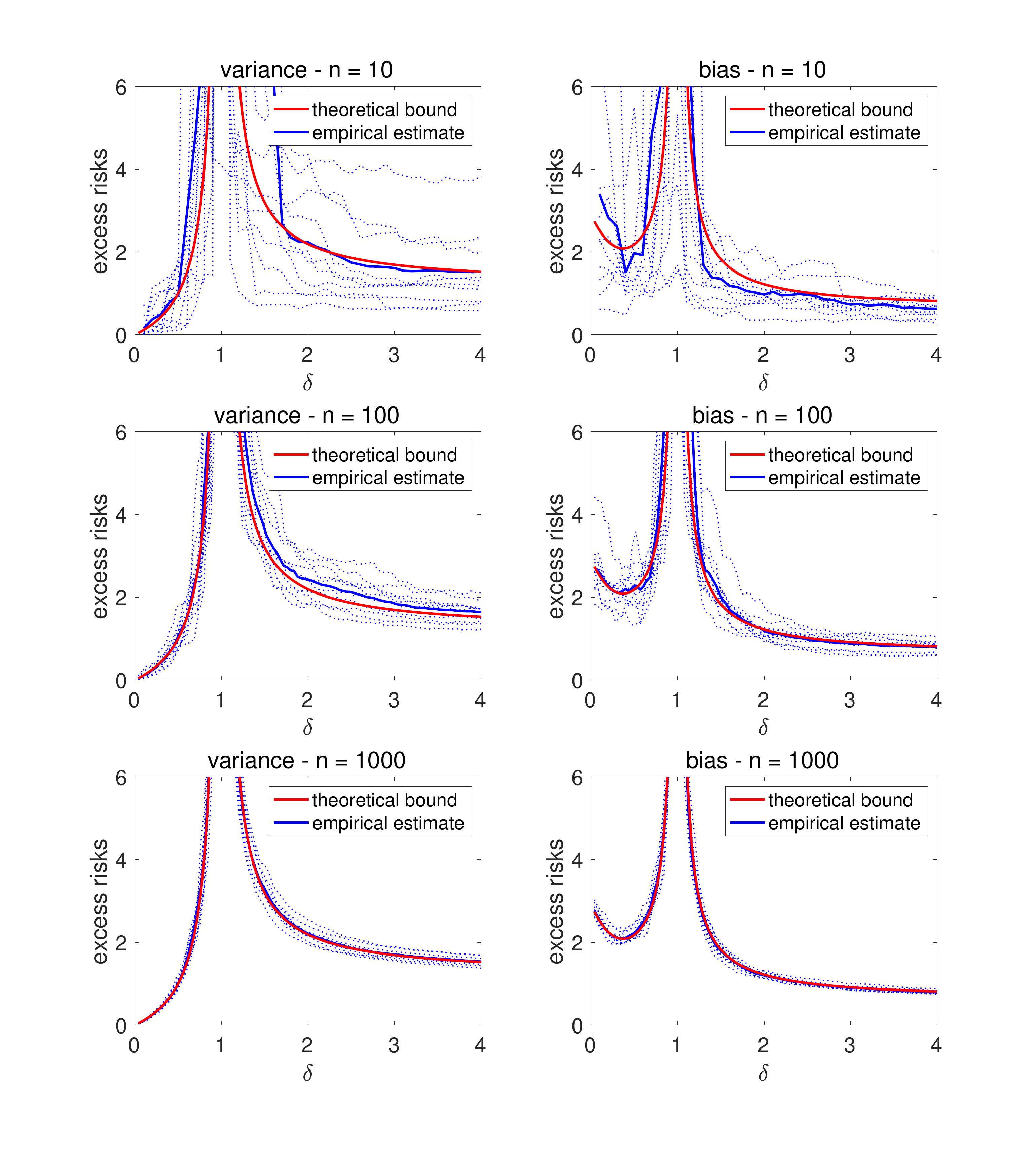}
\end{center}

\vspace*{-1.3cm}

\caption{Comparison of theoretical bounds and empirical estimates for a spectral measure with two Diracs (see text for details): (left) variance, (right) bias, with three different numbers of observations, with $n=10$ (top), $n=100$ (middle), and $n=1000$ (bottom). We plot ten realizations with the same spectral properties, as well as the average excess risk.
\label{fig:convergence_3examples}}
\end{figure}

\paragraph{Illustration of the double descent phenomenon}
We consider a fixed covariance matrix $\Sigma$ of size~$d$, with uniformly random eigenvectors and eigenvalues proportional to $1/k$, for $k \in \{1,\dots,d\}$ (non-isotropic), or constant (isotropic).  We normalize the matrix so that $\tr(\Sigma)=1$. We generate a vector $\theta_\ast \in \rb^d$ from a standard Gaussian distribution and then normalize it so that $\theta_\ast^\top \Sigma \theta_\ast = 1$. Given this unique prediction problem, we generate 40 replications of $Z$ and $S$ from Rademacher random variables and plot the empirical performance for the bias and the variance. For the bounds, we compute $\kappa_m$ from $\kappa_m^{-1} = \E \big[ \tr[ (S^\top \Sigma S)^{-1} ] \big]$, using an average over 40 replications.

In \myfig{bias_variance}, we show the results for the non-isotropic covariance matrix, where we see a U-shaped curve for the bias term. In contrast, in \myfig{bias_variance_no_ushaped}, we show the results for the isotropic covariance matrix, where we do not see a U-shaped curve for the bias term (and thus, there cannot be a U-shaped curve when summing  bias and variance). The asymptotic limits from \mysec{rp} closely match the empirical behavior in both cases.

\begin{figure}
\begin{center}
\includegraphics[scale=.4]{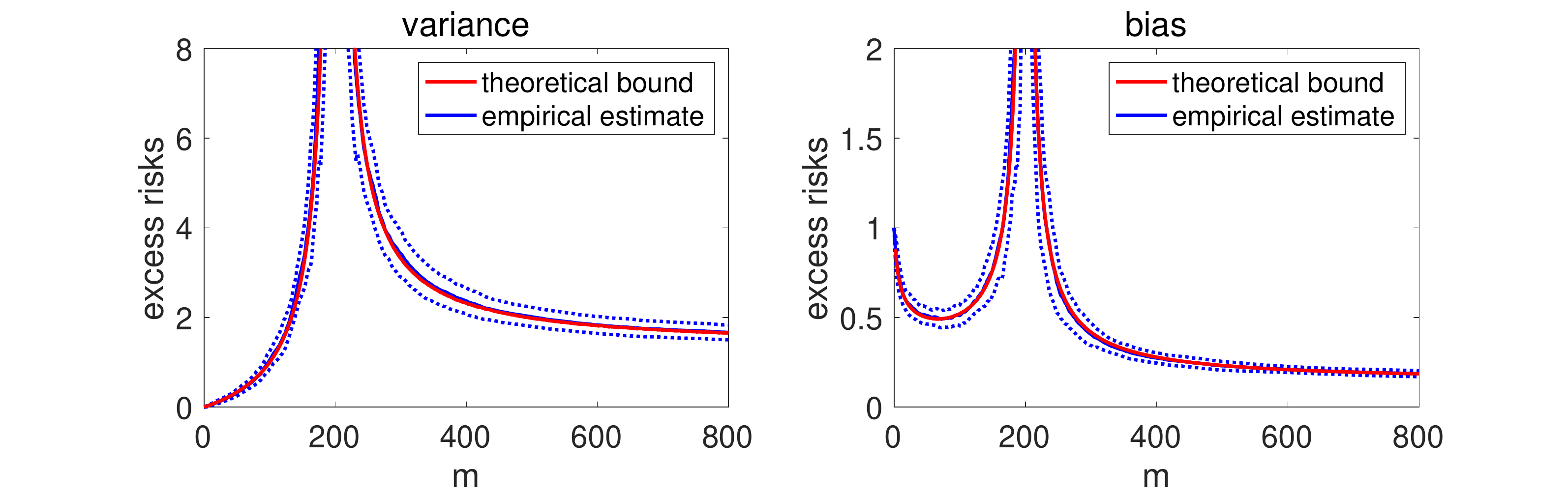}
\end{center}

\vspace*{-.4cm}

\caption{(Left) Variance with $\sigma=1$ and $\tr(\S)=1$. 
 (Right) Bias with $\theta_\ast^\top \S \theta_\ast= 1$. We consider $n=200$, $d=400$, with $Z$ and $S$ sampled from Rademacher random variables, and eigenvalues of $\S$ proportional to $1/k$.
 For the empirical curve, we plot the average performance over 40 replications as well as the standard deviation in dotted. \label{fig:bias_variance}}
\end{figure}

\begin{figure}
\begin{center}
\includegraphics[scale=.4]{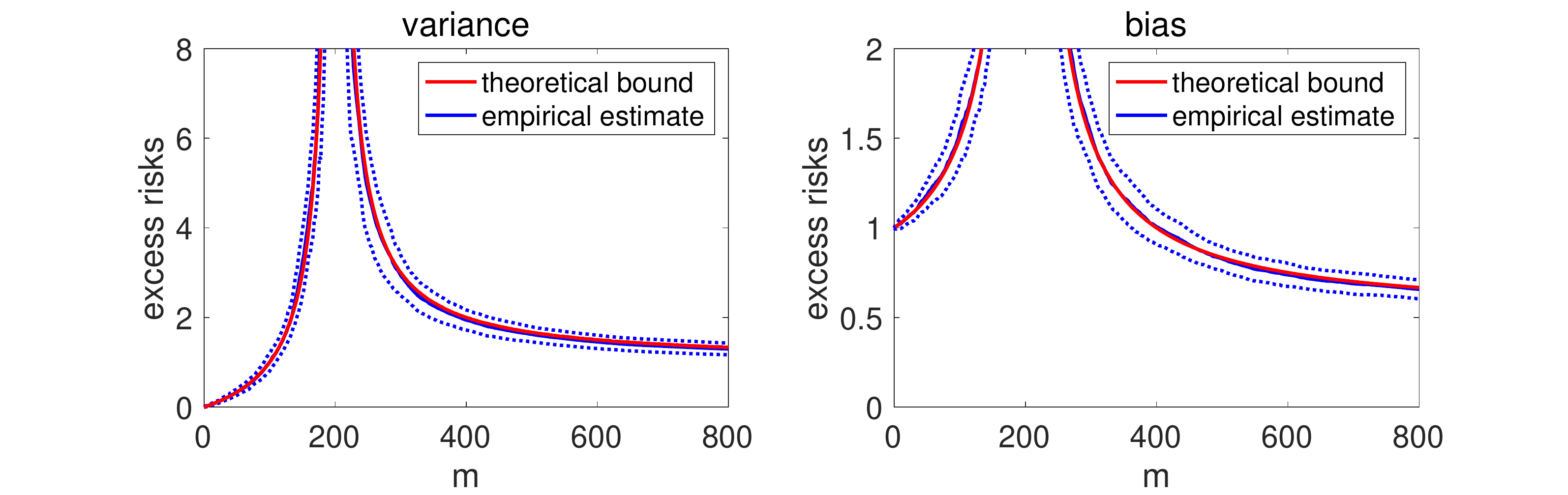}
\end{center}

\vspace*{-.4cm}

\caption{(Left) Variance with $\sigma=1$ and $\tr(\S)=1$. 
 (Right) Bias with $\theta_\ast^\top \S \theta_\ast= 1$. We consider $n=200$, $d=400$, with $Z$ and $S$ sampled from Rademacher random variables and uniform eigenvalues for $\S$.  For the empirical curve, we plot the average performance over 40 replications as well as the standard deviation in dotted. \label{fig:bias_variance_no_ushaped}}
\end{figure}

\section{Conclusion}
In this paper, we have provided a high-dimensional asymptotic analysis of the double descent phenomenon for random projections. This was done using an interpretation of random matrix theory results for empirical covariance matrices based on degrees of freedom. Several avenues are worth exploring, such as going beyond least-squares using tools from~\cite{liao2021hessian,loureiro2022fluctuations}, characterizing how quickly our asymptotic analysis kicks in using tools from~\cite{bai2008clt}, looking at more general random projection matrices~\cite{lejeune2022asymptotics}, or relating it to the related sketching procedures that perform linear regression on $Ty \in \rb^{ m}$ and $TX \in \rb^{m \times d}$, where the random matrix $T \in \rb^{m \times n}$ now acts on the left of the design matrix rather than on the right, leading to a form of downsampling often referred to as sketching~\cite{dobriban2019asymptotics,raskutti2016statistical}; see~\cite{chen2023sketched} for a recent  work in this direction.

 \subsubsection*{Acknowledgements}
The author thanks Daniel LeJeune, Andrea Montanari, Bruno Loureiro, Ryan Tibshirani, and Florent Krzakala for feedback on the first version of the manuscript. He also  acknowledges support from the French government under the management of the Agence Nationale de la Recherche as part of the ``Investissements d’avenir'' program, reference
ANR-19-P3IA-0001 (PRAIRIE 3IA Institute), as well as from the European Research Council
(grant SEQUOIA 724063).

 \appendix

\section{Random matrix theory results}
In this appendix, we provide a sketch of proof for classical random matrix theory results presented in \mysec{rmt_inter} and \mysec{implicitreg}, with a proof for the new results from \mysec{spectrace}. For more details, see~\cite{silverstein1995empirical,bai2010spectral}.

 \subsection{Self-consistency equation}
\label{app:spectral}

We follow the proof of~\cite{silverstein1995strong} and derive it in three steps.

\paragraph{First step}
We consider $n \hS = X^\top X = \sum_{i=1}^n x_i x_i^\top$, for $x_i \in \rb^d$ sampled with covariance matrix $\Sigma$ (but not necessarily Gaussian) and write, using the matrix inversion lemma:
\BEAS
 \tr \big[  X^\top X   ( X^\top X  - n z \idm)^{-1} \big] 
& = & \sum_{i=1}^n \tr \Big[  x_i x_i^\top   \Big(   \sum_{j \neq i}  x_j x_j^\top    - n z    \idm +  x_i x_i^\top    \Big)^{-1} \Big]
\\[-.2cm]
& = & \sum_{i=1}^n \frac{  x_i^\top   \big(   \sum_{j \neq i}   x_j x_j^\top  - n z    \idm \big)^{-1}   x_i }{1+ x_i^\top   \big(   \sum_{j \neq i} x_j x_j^\top   - n z    \idm \big)^{-1}  x_i} \\[-.2cm]
& = & n - \sum_{i=1}^n \frac{ 1 }{1+ x_i^\top   \big(   \sum_{j \neq i} x_j x_j^\top  - n z    \idm \big)^{-1}   x_i}.
\EEAS
Together with  $\tr \big[  X^\top X   ( X^\top X  - n z \idm)^{-1} \big] 
= \tr \big[   ( X     X^\top - nz \idm + nz \idm) ( X     X^\top- n z \idm)^{-1} \big]  = 
 n + n z \hphi(z)$, this leads to  the identity
\BEQ
\label{eq:idphi}
 - z \hphi(z) = \frac{1}{n}\sum_{i=1}^n \frac{ 1 }{1+ x_i^\top   \big(   \sum_{j \neq i} x_j x_j^\top  - n z    \idm \big)^{-1} x_i}.
 \EEQ

We also have more generally:
\BEQ
\label{eq:proofA} \hS   (  \hS\!   -\!   z \idm)^{-1}\!
=   \sum_{i=1}^n    x_i x_i^\top     \Big(   \sum_{j \neq i}  x_j x_j^\top    \!-\! n z    \idm +  x_i x_i^\top   \Big)^{-1}  
\!\!=\!\sum_{i=1}^n   \frac{ x_i x_i^\top  \big(   \sum_{j \neq i}   x_j x_j^\top    \!- \!n z    \idm \big)^{-1} }{1+ x_i^\top    \big(   \sum_{j \neq i}   x_j x_j^\top  \!-\! n z    \idm \big)^{-1} x_i}.\!\!\!
\EEQ
 
\paragraph{Second step}

We have, owing to \eq{idphi}, with the notation $\hS_{-i} = \frac{1}{n} \sum_{j \neq i} x_j x_j^\top$ for $i \in \{1,\dots,n\}$, and using $A^{-1} - B^{-1} = - B^{-1} ( A - B) B^{-1}$:
\BEAS
\big( \hS \!-\!  z\idm \big)^{-1}\! -\! 
\big( \!-\! z \hphi(z) \S\! -\! z \idm \big)^{-1}
& = & 
\big( z \hphi(z) \S  + z \idm \big)^{-1}
\Big( \hS  - ( - z \hphi(z) \S ) \Big) \big( \hS -  z\idm \big)^{-1}
\\[-.1cm]
& = & 
\big( z \hphi(z) \S  + z \idm \big)^{-1}
\Big( \frac{1}{n} \sum_{i=1}^n x_i x_i^\top  - ( - z \hphi(z) \S ) \Big) \big( \hS -  z\idm \big)^{-1}
\\[-.1cm]
& = & 
  \big( z \hphi(z) \S  \!+\! z \idm \big)^{-1} \sum_{i=1}^n
  \frac{  x_i x_i^\top   \big(   \sum_{j \neq i} x_j x_j^\top \! -\! n z    \idm \big)^{-1}
\! \! -\! \S ( n \hS \!-\! nz \idm)^{-1} }{1\!+\! x_i^\top   \big(   \sum_{j \neq i} x_j x_j^\top \! -\! n z    \idm \big)^{-1} x_i}
\\[-.1cm]
& = & 
  \big( z \hphi(z) \S  + z \idm \big)^{-1} \frac{1}{n}\sum_{i=1}^n
  \frac{  x_i x_i^\top   \big(       \hS_{-i}   -   z    \idm \big)^{-1}
  - \S (   \hS -  z \idm)^{-1} }{1+ x_i^\top   \big(   n \hS_{-i}  - n z    \idm \big)^{-1} x_i}.
  \EEAS
  We thus get
  \BEA
 \label{eq:Delta} \big( \hS -  z\idm \big)^{-1} 
  & = & \big( - z \hphi(z) \S - z \idm \big)^{-1} \Big( \idm - 
  \frac{1}{n} \sum_{i=1}^n
  \frac{  x_i x_i^\top   \big(       \hS_{-i}   -   z    \idm \big)^{-1}
  - \S (   \hS -  z \idm)^{-1} }{1+ x_i^\top   \big(   n \hS_{-i}  - n z    \idm \big)^{-1} x_i}\Big)
  \\[-.1cm]
 \notag & = &   \big( - z \hphi(z) \S - z \idm \big)^{-1}  ( \idm - \Delta).
  \EEA

The main property we will leverage is that $\Delta $ will almost certainly be  ``negligible''.
For this, we need that 
$\ds \tr\big[ \big( - z \hphi(z) \S - z \idm \big)^{-1} \Delta \big] = o(d)$, and we simply need to study each of the $n$ terms, and show that they are $o(d)$. The key is that $\big(    \hS_{-i}  -  z    \idm \big)^{-1}$ is independent of~$x_i$ and that for any deterministic (or independent random bounded) matrix, 
$ \tr [ ( z_iz_i^\top - \idm) N ] $ is small enough with a strong probabilistic control~\cite[Lemma 3.1]{silverstein1995empirical}. This is where we need i.i.d.~components for $z_i$ with sufficient moments (we assumed sub-Gaussian for simplicity, but weaker assumptions could be used to obtain the same almost-sure result). We can for example rely the on Hanson-Wright inequality~\cite{hansonwright}, which leads to, for a constant $c>0$:
$$
\P\Big[  \big| z_i^\top N z_i - \tr (N) \big| \leqslant c \big(  t \| N \|_{\rm op} + \sqrt{t} \| N \|_F \big)\Big] \geqslant 1 - 2 e^{-t}.
$$
This is then applied to $N$ dominated by $\Sigma$, and thus $\| N\|_F = O( \| \Sigma\|_F) = O(\sqrt{d}) = o(d)$, which is sufficient for the asymptotic result and hints at a rate in $O(1/\sqrt{d})$~\cite{bai2008clt}. See \cite{silverstein1995strong} for a detailed proof.

Overall, once we can neglect the term in $\Delta$,  we  get:
$
\tr\big[  ( -    z \hphi(z) \S -   z \idm)^{-1} \big] \sim 
 \tr \big[  ( \hS    -   z \idm)^{-1} \big] ,
$
and thus
\BEQ
\label{eq:proofB}
 \tr \big[ \big(  \hS -   z \idm \big)^{-1} \big] 
\sim \frac{- 1}{z\hphi(z)}  \tr\Big[  \Big(  \S +  \frac{1}{\hphi(z)} \idm \Big)^{-1} \Big]
= \frac{- d}{z } +  \frac{1}{z} \tr\Big[  \S \Big(  \S +  \frac{1}{\hphi(z)} \idm \Big)^{-1} \Big].
\EEQ

\paragraph{Third step}
We can rewrite
\BEA
\notag
 \tr \big[ \big(  \hS   -   z \idm \big)^{-1} \big]
& = &  \frac{1}{z}  \tr \big[ \big( z \idm - \hS      + \hS   \big) \big(  \hS    -   z \idm \big)^{-1} \big] \\
\notag & = &  - \frac{d}{z} 
+ \frac{1}{z}  \tr \big[   \hS    \big(  \hS    -   z \idm \big)^{-1} \big] =  - \frac{d}{z} 
+ \frac{1}{z}  \tr \big[    X   X^\top  \big(  X   X^\top-   nz \idm \big)^{-1} \big] \\
\label{eq:proofC} & = &  - \frac{d}{z} 
+ \frac{1}{z}  \tr \big[    \big( X   X^\top -nz \idm + nz \idm\big)  \big(  X  X^\top-   nz \idm \big)^{-1} \big]
= \frac{n-d}{z} 
+ n \hphi(z).
 \EEA
 Following~\cite{silverstein1995strong} and combining \eq{proofB} and \eq{proofC}, this leads to $\hphi(z) \to  {\varphi}(z)$, with 
 \BEQ
\label{eq:proofE}  \varphi(z)  + \frac{1}{z} = \frac{1}{nz} \tr\Big[  \S \Big(  \S +  \frac{1}{\varphi(z)} \idm \Big)^{-1} \Big],
  \EEQ
  which is the desired self-consistent equation in \eq{varphilambda} in \mysec{implicitreg}.
  
And even more intuitively, since $\tr \big[ \hS ( \hS - z \idm)^{-1} \big]
= d + z \tr \big[  ( \hS - z \idm)^{-1} \big] = nz ( \hphi(z)+ \frac{1}{z})$, we get \eq{eqdf} from \mysec{rmt_inter}:
 \BEQ
\label{eq:proofF} \tr\big[  \hS \big(   \hS - z\idm \big)^{-1} \big] 
 \sim\tr\Big[  \S \Big(  \S +  \frac{1}{\varphi(z)} \idm \Big)^{-1} \Big].
 \EEQ
 We have for $z = -\lambda$, with $\lambda>0$:
  $\tr\big[  \hS \big(   \hS + \lambda \idm \big)^{-1} \big] 
 \sim\tr\Big[  \S \Big(  \S +  \frac{1}{\varphi(-\lambda)} \idm \Big)^{-1} \Big] ,$ and thus
$$
 \varphi(-\lambda)  - \frac{1}{\lambda} = - \frac{1}{n\lambda} \tr\Big[  \S \Big(  \S +  \frac{1}{\varphi(-\lambda)} \idm \Big)^{-1} \Big] = - \frac{1}{n\lambda}  {\rm df}_1\Big(
 \frac{1}{\varphi(-\lambda)}
 \Big),
 $$
 leading to $\ds\lambda \varphi(-\lambda) = 1 - \frac{1}{n }  {\rm df}_1\Big(
 \frac{1}{\varphi(-\lambda)}
 \Big),$
 and thus the desired inequality with $\kappa(\lambda) = \frac{1}{\varphi(-\lambda)}$, presented in \mysec{implicitreg}.

\subsection{Equivalents of spectral functions}
\label{app:spectraltrace}

In this section, we prove Prop.~\ref{prop:spectral} and  Prop.~\ref{prop:spectralK}. Following~\cite{dobriban2018high}, we start with an asymptotic equivalent based on differentiation (see formal justification in~\cite{dobriban2018high}).
See~\cite{dar2021common,lejeune2022asymptotics} for similar results based more strongly on differentiation (which is only used here to derive an equivalent for $ \tr\big[   \big(   \hS - z\idm \big)^{-2} \big] $).

\paragraph{Using differentiation}
We have, by differentiating \eq{proofE} with respect to $z$:
\BEAS
\varphi(z) + z \varphi'(z)
& = & \frac{1}{n} \tr\Big[  \S \Big(  \S +  \frac{1}{\varphi(z)} \idm \Big)^{-2} \Big] \frac{ \varphi'(z)}{\varphi(z)^2},
\EEAS
which leads to
$\ds 
\frac{\varphi(z)}{\varphi'(z)}
= \frac{1}{n} \tr\Big[  \S \Big(  \S +  \frac{1}{\varphi(z)} \idm \Big)^{-2} \Big] \frac{ 1}{\varphi(z)^2} - z.
$
Thus,  differentiating \eq{proofF} with respect to $z$ and using the bound on $\frac{\varphi(z)}{\varphi'(z)}$ above, we get:
\BEAS
\tr\big[  \hS \big(   \hS - z\idm \big)^{-2} \big] 
&\! \sim \! &  \tr\Big[  \S \Big(  \S +  \frac{1}{\varphi(z)} \idm \Big)^{-2} \Big] \frac{ \varphi'(z)}{\varphi(z)^2}
=   \frac{  n \tr\big[  \S \big(  \S +  \frac{1}{\varphi(z)} \idm \big)^{-2} \big]}
{    \tr\big[  \S \big(  \S +  \frac{1}{\varphi(z)} \idm \big)^{-2} \big] \frac{ 1}{\varphi(z)}  - nz \varphi(z)   } 
\\
&\! = \!& \frac{ n \tr\big[  \S \big(  \S +  \frac{1}{\varphi(z)} \idm \big)^{-2} \big]}
{  \tr\big[  \S \big(  \S +  \frac{1}{\varphi(z)} \idm \big)^{-2} \big] \frac{ 1}{\varphi(z)}
 + n - \tr\big[  \S \big(  \S +  \frac{1}{\varphi(z)} \idm \big)^{-1} \big]   } = \frac{ n \tr\big[  \S \big(  \S +  \frac{1}{\varphi(z)} \idm \big)^{-2} \big]}
{    n -  \tr\big[  \S^2 \big(  \S +  \frac{1}{\varphi(z)} \idm \big)^{-2} \big] } .
\EEAS
This leads to the asymptotic equivalent
\BEA
\notag \tr\big[   \big(   \hS - z\idm \big)^{-2} \big] 
& = & \frac{1}{z} \tr\big[  (z \idm - \hS + \hS) \big(   \hS - z\idm \big)^{-2} \big] 
= \frac{1}{z} \tr\big[   \hS  \big(   \hS - z\idm \big)^{-2} \big]  
-  \frac{1}{z} \tr\big[  \big(   \hS - z\idm \big)^{-1} \big]  \\
\label{eq:prdr}
& \sim & \frac{1}{z} 
\frac{   n\tr\big[  \S \big(  \S +  \frac{1}{\varphi(z)} \idm \big)^{-2} \big]}
{    n -  \tr\big[  \S^2 \big(  \S +  \frac{1}{\varphi(z)} \idm \big)^{-2} \big]} - \frac{1}{z} 
\tr \big[   \big( - z \varphi(z) \S - z \idm \big)^{-1} \big],
\EEA
which we will need later.

\paragraph{Proof of \eq{trA1} and \eq{trA2}}
We now first show
  \BEAS
\tr \big[ A \big( \hS -  z\idm \big)^{-1} \big] 
& \sim & \tr \big[ A \big( - z \hphi(z) \S - z \idm \big)^{-1} ]
\sim   \tr \big[ A \big( - z \varphi(z) \S - z \idm \big)^{-1} ],
\\
& = &    \frac{-1}{z \varphi(z)} \tr \Big[ A \Big(  \S + \frac{1}{\varphi(z)} \idm \Big)^{-1}\Big],
\EEAS
where the last quantity is equivalent to  $  -\frac{d}{z} \int_0^{+\infty} \frac{  d\nu_A(\sigma)}{1 + \sigma \varphi(z)}$.
We have,  using \eq{Delta}:
$$
  \tr \big[ A \big( \hS -  z\idm \big)^{-1} \big] 
-  \tr \big[ A \big( - z \hphi(z) \S - z \idm \big)^{-1} ] 
= - \tr \big[ A   \big( - z \hphi(z) \S - z \idm \big)^{-1} \Delta \big],
$$
  which is negligible as soon as $\| A\|_{\rm op}$ is bounded (using the same arguments as in Appendix~\ref{app:spectral}). We can then express 
  $\tr \big[ A \big( \S +\frac{1}{\varphi(z)} \idm \big)^{-1} ]$ as $  \int_0^{+\infty} \frac{ d\nu_A(\sigma)}{\sigma + \frac{1}{\varphi(z)} }$. This 
   leads to the desired result in Prop.~\ref{prop:spectral}.
  
  \paragraph{Proof of \eq{trAB1} and \eq{trAB2}}

  For the quadratic form, we have for any matrices $A$ and $B$, still using \eq{Delta}:
  \BEAS
  & & \tr \big[ A  \big( \hS -  z\idm \big)^{-1}  B  \big( \hS -  z\idm \big)^{-1}  \big] 
  \\
  &\!\! \!\!\!= \!& 
  \tr \big[ A   \big( - z \hphi(z) \S - z \idm \big)^{-1}  ( \idm - \Delta)  B \big( - z \hphi(z) \S - z \idm \big)^{-1}    
   ( \idm - \Delta)  \big] 
\\
 &\!\! \!\!\!= \!& 
  \tr \big[ A   \big( - z \hphi(z) \S - z \idm \big)^{-1}    B   \big( - z \hphi(z) \S - z \idm \big)^{-1}    \big] 
  + \tr \big[ A   \big( - z \hphi(z) \S - z \idm \big)^{-1}  \Delta   B     \big( - z \hphi(z) \S - z \idm \big)^{-1}   \Delta \big] \\
  & & -   \tr \big[ A   \big( - z \hphi(z) \S - z \idm \big)^{-1} \Delta  B \big( - z \hphi(z) \S - z \idm \big)^{-1} 
\! -    \tr \big[ A   \big( - z \hphi(z) \S - z \idm \big)^{-1}   B \big( - z \hphi(z) \S - z \idm \big)^{-1}   \Delta  
\big] .
  \EEAS
  The last two terms are negligible with the same arguments as in Appendix~\ref{app:spectral} as soon as $\|A\|_{\rm op}$ and $\| B\|_{\rm op}$ are bounded. 
  We have, for the second term:
  \BEAS
  & & \tr \big[ A   \big( - z \hphi(z) \S - z \idm \big)^{-1}  \Delta   B       \big( - z \hphi(z) \S - z \idm \big)^{-1}\Delta    \big] \\
  & \!\!\!\!= \!\!& 
  \frac{1}{n^2}   \sum_{i,j=1}^n
  \tr \Big[  A   \big( \!-\! z \hphi(z) \S \!-\! z \idm \big)^{-1} 
    \frac{  ( x_i x_i^\top - \S)   \big(       \hS_{-i}  \! -\!   z    \idm \big)^{-1}
  }{1+ x_i^\top   \big(    n  \hS_{-i}  \!- \!  nz    \idm \big)^{-1} x_i}
   B       \big(\! -\! z \hphi(z) \S\! -\! z \idm \big)^{-1}
    \frac{  ( x_j x_j^\top - \S)   \big(       \hS_{-j}  \! - \!  z    \idm \big)^{-1}
  }{1+ x_j^\top   \big(    n  \hS_{-j}\!  - \!  nz    \idm \big)^{-1} x_j}\Big]
  \\
   & \!\!\!\!= \!\!& 
    \frac{1}{n^2}   \sum_{i,j=1}^n \frac{\tr \big[
    A   \big( \!-\! z \hphi(z) \S \!-\! z \idm \big)^{-1} 
    ( x_i x_i^\top - \S)   \big(       \hS_{-i}  \! - \!  z    \idm \big)^{-1}
     B       \big( \!-\! z \hphi(z) \S \!-\! z \idm \big)^{-1}
     ( x_j x_j^\top - \S)   \big(       \hS_{-j}  \! -\!   z    \idm \big)^{-1}
\big]
   }{\big(
1+ x_i^\top   \big(    n  \hS_{-i}  \!-\!   nz    \idm \big)^{-1} x_i
    \big)\big(1+ x_j^\top   \big(    n  \hS_{-j}  \!-\!   nz    \idm \big)^{-1} x_j\big)
    } .
    \EEAS
    When $i \neq j$, then we can separate terms with $x_i x_i^\top - \S$ and $x_j x_j^\top - \S$, which end up being negligible, thus leading to an equivalent
\BEAS
   \!\! \frac{1}{n^2}  \! \sum_{i=1}^n \!\frac{\tr \big[
    A   \big( \!- \!z \hphi(z) \S \!-\! z \idm \big)^{-1} 
    ( x_i x_i^\top\! -\! \S)   \big(       \hS_{-i} \!  -\!   z    \idm \big)^{-1}
     \!B       \big( \!-\! z \hphi(z) \S \!- \!z \idm \big)^{-1}
     ( x_i x_i^\top \!-\! \S)   \big(       \hS_{-i}  \! - \!  z    \idm \big)^{-1}
\big]
   }{\big(
1+ x_i^\top   \big(    n  \hS_{-i}  \!- \!  nz    \idm \big)^{-1} x_i
    \big)^2
    } .  \EEAS
    To study its asymptotic limit, we need to characterize the asymptotic equivalent of the quantity
   $   \tr \big[ C ( x_i x_i^\top - \S)   D  ( x_i x_i^\top - \S)  \big]  
   =    \tr \big[ \S^{1/2} C \S^{1/2} ( z_i z_i^\top - \idm)  \S^{1/2} D \S^{1/2}  ( z_i z_i^\top - \idm)  \big] $, with $C$ and $D$ bounded in operator norm. For $M= \S^{1/2} C \S^{1/2} $, and $N = \S^{1/2} D \S^{1/2} $,  we can write:
   \BEAS
   \tr \big[ M (z_i z_i^\top\! -\! \idm) N(z_i z_i^\top\! -\! \idm) \big]
   \!-\!  \tr(M) \tr(N) 
   & \!= \!&  (z_i^\top M z_i \!-\! \tr(M) ) 
   (z_i^\top N z_i\! -\! \tr(N) ) \\
   & & + \tr(M)   (z_i^\top N z_i \!-\! \tr(N) )  + \tr(N)   (z_i^\top M z_i \!-\! \tr(M) )   \\
   & &  - \tr[ (MN+NM) (z_i z_i^\top\! - \! \idm) ] \\
   & \!= \!& O_p( \| M \|_F \cdot \|N\|_F \!+\! \tr(M) \| N\|_F \!+\! \tr(N) \| M\|_F \!+\! \| NM\|_F).
   \EEAS
   using the property from Appendix~\ref{app:spectral} obtain from the i.i.d.~assumption on the components of $z_i$, which is negligible compared to the term $ \tr(M) \tr(N) $. Thus, using in addition that $ \hS_{-j}$ is asymptotically equivalent to $\hS$, we get the equivalent
   $$
    \frac{1}{n^2}   \sum_{i=1}^n \frac{\tr \big[
     ( \hS - z \idm)^{-1} A   \big( - z \hphi(z) \S - z \idm \big)^{-1} \Sigma \big]
     \cdot 
     \tr \big[
     ( \hS - z \idm)^{-1} B   \big( - z \hphi(z) \S - z \idm \big)^{-1} \Sigma \big]
      }{\big(
1+ x_i^\top   \big(    n  \hS_{-i}  -   nz    \idm \big)^{-1} x_i
    \big)^2}.
$$
   We thus overall have
  \BEAS
  & & \tr \big[ A   \big( - z \hphi(z) \S - z \idm \big)^{-1}  \Delta   B   \Delta^\top   \big( - z \hphi(z) \S - z \idm \big)^{-1}    \big] \\
  & \sim & 
     \tr \big[
     ( \hS - z \idm)^{-1} A   \big( - z \hphi(z) \S - z \idm \big)^{-1} \Sigma \big]
     \cdot 
     \tr \big[
     ( \hS - z \idm)^{-1} B   \big( - z \hphi(z) \S - z \idm \big)^{-1} \Sigma \big]
    \cdot \square
\\
 & \sim & 
     \tr \big[
    A   \big( - z \varphi(z) \S - z \idm \big)^{-2} \Sigma \big]
     \cdot 
     \tr \big[
    B   \big( - z \varphi(z) \S - z \idm \big)^{-2} \Sigma \big]
      \cdot \square
\EEAS
with $\ds \square =  \frac{1}{n^2}   \sum_{i=1}^n \frac{1  }{\big(
1+ x_i^\top   \big(    n  \hS_{-i}  -   nz    \idm \big)^{-1} x_i
    \big)^2
    }  $.
This leads to:
 \BEAS
    \tr \big[ A  \big( \hS -  z\idm \big)^{-1}  B  \big( \hS -  z\idm \big)^{-1}  \big]  
 & \sim &  \tr \big[ A   \big( - z \varphi(z) \S - z \idm \big)^{-1}    B   \big( - z \varphi(z) \S - z \idm \big)^{-1}    \big] 
 \\
 & & + \tr \big[
    A   \big( - z \varphi(z) \S - z \idm \big)^{-2} \Sigma \big]
     \cdot 
     \tr \big[
    B   \big( - z \varphi(z) \S - z \idm \big)^{-2} \Sigma \big]
    \cdot   \square.
 \EEAS
 To obtain an equivalent of $\square$, we consider the case
 $A= B = \idm$, to get:
 \BEAS
 \tr \big[    \big( \hS -  z\idm \big)^{-2}  \big]
&  \sim & 
 \tr \big[    \big( - z \varphi(z) \S - z \idm \big)^{-2}      \big] 
 + \big( \tr \big[
        \big( - z \varphi(z) \S - z \idm \big)^{-2} \Sigma \big]
     \big)^2
      \cdot \square,
\EEAS 
which allows to compute an equivalent of $\square$, as, using \eq{prdr}, with $z \varphi(z) \sim  {\rm df}_1(1/\varphi(z)) - \frac{1}{n}$.
\BEAS
\square 
& \sim & 
\frac{ \tr \big[    \big( \hS -  z\idm \big)^{-2}  \big] - \tr \big[    \big( - z \varphi(z) \S - z \idm \big)^{-2}      \big]  }{
\big( \tr \big[
        \big( - z \varphi(z) \S - z \idm \big)^{-2} \Sigma \big]
     \big)^2} \\
     & \sim & 
\frac{ \frac{1}{z} 
\frac{   n\tr\big[  \S \big(  \S +  \frac{1}{\varphi(z)} \idm \big)^{-2} \big]}
{    n -  {\rm df}_2(1/\varphi(z))  } - \frac{1}{z} 
\tr \big[   \big( - z \varphi(z) \S - z \idm \big)^{-1} ] - \tr \big[    \big( - z \varphi(z) \S - z \idm \big)^{-2}      \big]  }{
\big( \tr \big[
        \big( - z \varphi(z) \S - z \idm \big)^{-2} \Sigma \big]
     \big)^2}
\\
     & \sim & 
\frac{ \frac{1}{z} 
\frac{   n\tr\big[  \S \big(  \S +  \frac{1}{\varphi(z)} \idm \big)^{-2} \big]}
{    n -  {\rm df}_2(1/\varphi(z))  } + \frac{1}{z ^2 \varphi(z)} 
\tr \big[   \big( \S + \frac{1}{\varphi(z)}\idm \big)^{-1} ] -
\frac{1}{z ^2 \varphi(z) }   \tr \big[    \frac{1}{\varphi(z)}\big(   \S + \frac{1}{\varphi(z)}\idm \big)^{-2}      \big]  }{
 \big( 
 \frac{1}{z^2 \varphi(z)}\tr \big[ \frac{1}{\varphi(z)} 
        \big(  \S + \frac{1}{\varphi(z)} \idm \big)^{-2} \Sigma \big]
     \big)^2}
\\
     & \sim & 
\frac{ \frac{1}{z} 
\frac{   n\tr\big[  \S \big(  \S +  \frac{1}{\varphi(z)} \idm \big)^{-2} \big]}
{    n -  {\rm df}_2(1/\varphi(z))  }  + 
\frac{1}{z ^2 \varphi(z) }   \tr \big[   \S \big(   \S + \frac{1}{\varphi(z)}\idm \big)^{-2}      \big]  }{
 \big( 
 \frac{1}{z^2 \varphi(z)}\tr \big[ \frac{1}{\varphi(z)} 
        \big(  \S + \frac{1}{\varphi(z)} \idm \big)^{-2} \Sigma \big]
     \big)^2}
  \sim  
\frac{ \frac{1}{z} 
\frac{   n }
{    n -  {\rm df}_2(1/\varphi(z))  }  + 
\frac{1}{z ^2 \varphi(z) }     }{
 \big( 
 \frac{1}{z^2 \varphi(z)} \big)^2 \frac{1}{\varphi(z)} \tr \big[ \frac{1}{\varphi(z)} 
        \big(  \S + \frac{1}{\varphi(z)} \idm \big)^{-2} \Sigma \big]
     }
\\
   & \sim & 
\frac{ 
\frac{   n z \varphi(z) }
{    n -  {\rm df}_2(1/\varphi(z))  }  + 
1      }{
 \frac{1}{z^2 \varphi(z)^2}  \tr \big[ \frac{1}{\varphi(z)} 
        \big(  \S + \frac{1}{\varphi(z)} \idm \big)^{-2} \Sigma \big]
     }
=
\frac{ 
\frac{  {\rm df}_1(1/\varphi(z))   - n }
{    n -  {\rm df}_2(1/\varphi(z))  }  + 
1      }{
 \frac{1}{z^2 \varphi(z)^2}  \tr \big[ \frac{1}{\varphi(z)} 
        \big(  \S + \frac{1}{\varphi(z)} \idm \big)^{-2} \Sigma \big]
     }
\\
  & \sim & 
\frac{ 
\frac{  {\rm df}_1(1/\varphi(z))   - n }
{    n -  {\rm df}_2(1/\varphi(z))  }  + 
1      }{
 \frac{1}{z^2 \varphi(z)^2}  
 \big(
   {\rm df}_1(1/\varphi(z))  -   {\rm df}_2(1/\varphi(z)) 
 \big)     }
 = \frac{z^2 \varphi(z)^2}{ n -  {\rm df}_2(1/\varphi(z))  }.
\EEAS
 
 Overall, we get
  \BEAS
 & &  \tr \big[ A  \big( \hS -  z\idm \big)^{-1}  B  \big( \hS -  z\idm \big)^{-1}  \big]  \\
 & \sim & \textstyle \frac{1}{z^2 \varphi(z)^2}  \tr \big[ A   \big( \S + \frac{1}{\varphi(z)} \idm \big)^{-1}    B   \big( \S + \frac{1}{\varphi(z)} \idm \big)^{-1}    \big] 
 \\
 & & \textstyle+ \frac{1}{z^4 \varphi(z)^4}   \tr \big[
    A    \big( \S + \frac{1}{\varphi(z)} \idm \big)^{-2}  \Sigma \big]
     \cdot 
     \tr \big[
    B    \big( \S + \frac{1}{\varphi(z)} \idm \big)^{-2}  \Sigma \big]
    \cdot   \frac{z^2 \varphi(z)^2}{ n -  {\rm df}_2(1/\varphi(z))  }
\\
& \sim &\textstyle \frac{1}{z^2 \varphi(z)^2}  \tr \big[ A   \big( \S + \frac{1}{\varphi(z)} \idm \big)^{-1}    B   \big( \S + \frac{1}{\varphi(z)} \idm \big)^{-1}    \big] 
 \\
 & & \textstyle + \frac{1}{z^2 \varphi(z)^2}   \tr \big[
    A    \big( \S + \frac{1}{\varphi(z)} \idm \big)^{-2}  \Sigma \big]
     \cdot 
     \tr \big[
    B    \big( \S + \frac{1}{\varphi(z)} \idm \big)^{-2}  \Sigma \big]
    \cdot   \frac{1}{ n -  {\rm df}_2(1/\varphi(z))  },
 \EEAS
 which is \eq{trAB2}.

We also have, by writing  $ \hS\big( \hS -  z\idm \big)^{-1} = \idm +  z  \big( \hS -  z\idm \big)^{-1}$:
  \BEAS
 & &  \tr \big[ A  \hS\big( \hS -  z\idm \big)^{-1}  B  \big( \hS -  z\idm \big)^{-1}  \big]  \\
 &= & \textstyle z \tr \big[ A  \big( \hS -  z\idm \big)^{-1}  B  \big( \hS -  z\idm \big)^{-1}  \big] 
 +\tr \big[ A   B  \big( \hS -  z\idm \big)^{-1}  \big] \\
& \sim & - \frac{1}{z \varphi(z)}   \tr \big[ A   B  \big( \S  + \frac{1}{\varphi(z)} \idm \big)^{-1}  \big] +  \frac{1}{z  \varphi(z)^2}  \tr \big[ A   \big( \S + \frac{1}{\varphi(z)} \idm \big)^{-1}    B   \big( \S + \frac{1}{\varphi(z)} \idm \big)^{-1}    \big] 
 \\
 & & \textstyle+ \frac{1}{z \varphi(z)^2}   \tr \big[
    A    \big( \S + \frac{1}{\varphi(z)} \idm \big)^{-2}  \Sigma \big]
     \cdot 
     \tr \big[
    B    \big( \S + \frac{1}{\varphi(z)} \idm \big)^{-2}  \Sigma \big]
    \cdot   \frac{1}{ n -  {\rm df}_2(1/\varphi(z))  }
\\
 & \sim &\textstyle - \frac{1}{z \varphi(z)}   \tr \big[ A   B  \big( \S  + \frac{1}{\varphi(z)} \idm \big)^{-1}  \big] +  \frac{1}{z  \varphi(z)}  \tr \big[ A   \frac{1}{\varphi(z)}\big( \S + \frac{1}{\varphi(z)} \idm \big)^{-1}    B   \big( \S + \frac{1}{\varphi(z)} \idm \big)^{-1}    \big] 
 \\
 & & \textstyle+ \frac{1}{z \varphi(z)^2}   \tr \big[
    A    \big( \S + \frac{1}{\varphi(z)} \idm \big)^{-2}  \Sigma \big]
     \cdot 
     \tr \big[
    B    \big( \S + \frac{1}{\varphi(z)} \idm \big)^{-2}  \Sigma \big]
    \cdot   \frac{1}{ n -  {\rm df}_2(1/\varphi(z))  }
\\
 & \sim &\textstyle - \frac{1}{z \varphi(z)}   \tr \big[ A   \S  \big( \S  + \frac{1}{\varphi(z)} \idm \big)^{-1} B  \big( \S  + \frac{1}{\varphi(z)} \idm \big)^{-1}  \big]  
 \\
 & & \textstyle+ \frac{1}{z \varphi(z)^2}   \tr \big[
    A    \big( \S + \frac{1}{\varphi(z)} \idm \big)^{-2}  \Sigma \big]
     \cdot 
     \tr \big[
    B    \big( \S + \frac{1}{\varphi(z)} \idm \big)^{-2}  \Sigma \big]
    \cdot   \frac{1}{ n -  {\rm df}_2(1/\varphi(z))  }.
\EEAS

We also finally have by using again $ \hS\big( \hS -  z\idm \big)^{-1} = \idm +  z  \big( \hS -  z\idm \big)^{-1}$:
  \BEAS
 & &  \tr \big[ A  \hS\big( \hS -  z\idm \big)^{-1}  B \hS\big( \hS -  z\idm \big)^{-1} \big]  \\
& \sim & \textstyle  \tr \big[ A   \S  \big( \S  + \frac{1}{\varphi(z)} \idm \big)^{-1} B  \S\big( \S  + \frac{1}{\varphi(z)} \idm \big)^{-1}  \big]  
 \\
 & &\textstyle + \frac{1}{ \varphi(z)^2}   \tr \big[
    A    \big( \S + \frac{1}{\varphi(z)} \idm \big)^{-2}  \Sigma \big]
     \cdot 
     \tr \big[
    B    \big( \S + \frac{1}{\varphi(z)} \idm \big)^{-2}  \Sigma \big]
    \cdot   \frac{1}{ n -  {\rm df}_2(1/\varphi(z))  },
\EEAS
which is \eq{trAB1}.

     \bibliography{double_descent}

 \end{document}